\definecolor{commentcolor}{RGB}{110,154,155}   
\theoremstyle{plain}
\newtheorem{theorem}{Theorem}[section]
\newtheorem{proposition}[theorem]{Proposition}
\newtheorem{definition}[theorem]{Definition}
\title{Robust Representation Learning by Clustering with Bisimulation Metrics \\for Visual Reinforcement Learning with Distractions}
\author{
	Qiyuan Liu\textsuperscript{\rm 1}, Qi Zhou\textsuperscript{\rm 1}, Rui Yang\textsuperscript{\rm 1}, Jie Wang\textsuperscript{\rm 1,2}\thanks{Corresponding author.}
}
\begin{document}
	
	\maketitle
	\begin{abstract}
		Recent work has shown that representation learning plays a critical role in sample-efficient reinforcement learning (RL) from pixels. 
		Unfortunately, in real-world scenarios, representation learning is usually fragile to task-irrelevant distractions such as variations in background or viewpoint.
		To tackle this problem, we propose a novel clustering-based approach, namely \textbf{C}lustering with \textbf{B}isimulation \textbf{M}etrics (CBM), which learns robust representations by grouping visual observations in the latent space.
		Specifically, CBM alternates between two steps: (1) grouping observations by measuring their bisimulation distances to the learned prototypes; (2) learning a set of prototypes according to the current cluster assignments. 
		Computing cluster assignments with bisimulation metrics enables CBM to capture task-relevant information, as bisimulation metrics quantify the behavioral similarity between observations. Moreover, CBM encourages the consistency of representations within each group, which facilitates filtering out task-irrelevant information and thus induces robust representations against distractions. 
		An appealing feature is that CBM can achieve sample-efficient representation learning even if multiple distractions exist simultaneously.
		Experiments demonstrate that CBM significantly improves the sample efficiency of popular visual RL algorithms and achieves state-of-the-art performance on both multiple and single distraction settings. The code is available at \href{https://github.com/MIRALab-USTC/RL-CBM}{https://github.com/MIRALab-USTC/RL-CBM}. 
	\end{abstract}
	
	\section{Introduction}Reinforcement learning (RL) from visual observations has achieved remarkable success in various domains ranging from video games \cite{human,lample2017playing} to robotics manipulation \cite{levine2016end,kalashnikov2018scalable}. Recently, representation learning that embeds image inputs into low-dimensional vectors has drawn great attention in visual RL. Popular approaches include learning sequential autoencoders \cite{sacae,slac}, applying data augmentation \cite{drq,drac} or constructing auxiliary tasks \cite{aux1,curl,atc}. However, recent work has shown that many of these methods can be easily distracted by task-irrelevant distractions, such as variations in background or viewpoint \cite{dcs,tia}. A fundamental challenge is to learn robust representations that effectively capture task-relevant information and are invariant to the task-irrelevant distractions \cite{dbc}.
	
	In order to learn robust representations, many recent approaches remedy the overfitting of the encoder by applying strong data augmentation, such as color-jitter \cite{rad} or random convolution \cite{randconv}.
	However, they require prior knowledge of the distraction to choose semantic-preserving image transformations in RL settings \cite{secant}. 
	Moreover, many of these methods require access to the environments without distractions for sample-efficient policy optimization. \cite{svea,soda}.
	Another line of work proposes a series of auxiliary tasks. One of the most popular schemes is to introduce contrastive objectives, which can encourage representations to preserve dynamics of the latent space \cite{dreaming,tpc} or behavioral similarity between states \cite{dbc,pse}. However, these contrastive-based methods usually rely on carefully constructed positive-negative pairs or a large batch size \cite{swav}.
	
	In this paper, we propose a novel clustering-based approach, namely \textbf{c}lustering with \textbf{b}isimulation \textbf{m}etrics (CBM), which learns robust representations by discriminating between groups of visual observations with similar behavior. 
	Specifically, at each training step, CBM first computes predicted cluster assignments via the geometric similarity between representations and a set of prototypes. 
	Then, CBM calculates target cluster assignments by measuring the bisimulation distances between visual observations and prototypes. 
	Finally, CBM optimizes the encoder and prototypical representations by enforcing the consistency between the predicted and target assignments. 
	An appealing feature of CBM is that it can improve the robustness against distractions without domain knowledge or carefully constructed pairwise comparison.
	Moreover, compared with previous clustering-based methods, CBM exploits the properties of RL tasks and achieves task-specific clustering, which enables CBM to extract task-relevant information effectively.
	
	The proposed method CBM is compatible with most visual RL algorithms. In our experiments, we combine CBM with two popular baselines, DrQ \cite{drq} and DrQ-v2 \cite{drqv2}. We perform empirical evaluations on Distracting Control Suite \cite{dcs}. Experiments demonstrate that our method significantly improves the performance of these two algorithms and achieves state-of-the-art performance both on multiple and single distraction settings.  
	We also conduct quantitative analysis and visualization of the clustering results to show that our method learns representations that hardly contain task-irrelevant information.
	
	To summarize, this paper makes the following contributions: 
	(1) We are the first to incorporate the task-specific properties of RL problems into clustering-based representation learning.
	(2) We propose a novel approach CBM, which learns robust representations by grouping visual observations with bisimulation metrics.
	(3) Our experiments demonstrate that the proposed CBM significantly improves performance over popular visual RL baselines and achieves sample-efficient representation learning even if multiple distractions exist simultaneously.
	
	\section{Related Work}
	\noindent \textbf{Clustering for representation learning.}
	Learning representations by clustering is one of the most promising approaches for self-supervised learning of neural networks. \citet{deepcluster} cluster deep features by k-means and use the cluster assignments as pseudo-labels to learn convnets. 
	To avoid degenerate solutions, \citet{selflabel} propose a principled formulation by adding the constraint that the labels must induce equipartition of the data. 
	Later, \citet{swav} learned prototypical representations through contrastive losses. They obtain online assignments and make it possible to scale to any dataset size. 
	Following this, recent approaches \cite{proto,dreamerpro} combine prototypical representations and dynamics learning in the RL setting. 
	In contrast to these methods, CBM incorporates the task-specific properties of RL problems into the clustering process and thus induces robust representations against distractions.
	
	\noindent \textbf{Representation learning for RL}
	Many recent methods have taken inspiration from the successes of representation learning in computer vision to improve sample efficiency and generalization \cite{vincent2008extracting,chen2020simple,xie2020unsupervised}.
	Learning autoencoders by minimizing reconstruction errors has been proven effective in visual RL \cite{slac,sacae,dreamer}.
	Later, Several researchers \citet{atc,mcurl} perform contrastive learning that maximizes agreement between augmented versions of the same observation. 
	More recently, adopting data augmentations such as random shifts\cite{drq,drqv2} has also been demonstrated effective in visual RL. 
	Another line of work \cite{pisac,infomax,virtual} has designed various auxiliary tasks to encourage the model to capture the predictive information. 
	However, when training with distractions, most of these methods suffer from performance degradation due to the interference of task-irrelevant information.
	
	\noindent \textbf{RL with visual distractions.}
	Learning control from pixels with distractions requires robust representation learning. Recent work has demonstrated the effectiveness of strong augmentation techniques in RL \cite{rad}. However, the strong augmentation may cause training instability or divergence.
	Recent approaches resolve this problem by policy distillation \cite{secant} or $Q$ function stabilization \cite{svea}.  
	Although their results are encouraging, these methods usually require domain knowledge to choose proper augmentation or clean environments without distractions for sample-efficient training. 
	In model-based RL, many papers \cite{tpc,dreaming} replace the reconstruction-based objectives with contrastive-based losses, which encourages the encoder to capture controllable and predictable information in the latent space. Another kind of method \cite{pse,dbc} proposes to learn invariant representations for RL by forcing the latent space to preserve the behavioral similarity. These methods rely on constructing positive pairs or learning accurate dynamics models, which may be difficult in environments with complex distractions.
	
	\section{Preliminaries }
	In this section, we first present the notation and the definition of bisimulation metrics. Then we briefly introduce the visual RL baselines and previous clustering-based representation learning methods that CBM builds on top of.
	\subsection{Notation}
	We consider the environment as an infinite horizon Markov decision process defined by a tuple $\langle \mathcal{S},\mathcal{A},\mathcal{P},r,\gamma\rangle$, where $\mathcal{S}$ is the state space, $\mathcal{A}$ is the action space, $\mathcal{P}(\cdot \mid \mathbf{s}, \mathbf{a})$ is the transitioning probability from state $\mathbf{s} \in \mathcal{S}$ to $\mathbf{s}^{\prime} \in \mathcal{S}$ , $r:\mathcal{S}\times \mathcal{A}\rightarrow [0,1]$ is a reward function, and $\gamma \in [0,1)$ is the discount factor. Since image observations only provide partial state observability, we define a state $\mathbf{s}_t$ by stacking several consecutive image observations $\mathbf{s}_t = \{\mathbf{o}_t,\mathbf{o}_{t-1},\ldots,\mathbf{o}_{t-k}\}$, $\mathbf{o}\in \mathcal{O}$, where $\mathcal{O}$ is the high-dimensional observation space (image pixels). The goal is to learn a parameterized policy $\pi_{\theta}: \mathcal{S}\rightarrow \mathcal{A} $ such that maximizes the cumulative return $\mathbb{E}_{\pi}\left[\sum_{t=0}^{\infty}\gamma^t r(\mathbf{s}_t, \mathbf{a}_t)\right]$.
	
	\subsection{Bisimulation Metrics}\label{bisim}
	Intuitively, two states are behaviorally equivalent if they receive indistinguishable reward sequences given any action. 
	By recursively utilizing the reward signal, bisimulation metrics \cite{ferns2011} define a distance function $d: \mathcal{S}\times\mathcal{S}\rightarrow \mathbb{R}_{\geq 0}$ that measures how behaviorally differently two states are. 
	\begin{definition}
		\cite{ferns2011} Given two arbitrary states $\mathbf{s}_1$,$\mathbf{s}_{2}\in\mathcal{S}$ and $c\in[0,1)$, the bisimulation distance between $\mathbf{s}_1$,$\mathbf{s}_{2}$ is defined as
		$$
		\begin{aligned}
			d(\mathbf{s}_1,\mathbf{s}_{2})=\max _{a \in \mathcal{A}}(&(1-c)\left|r_{\mathbf{s}_1}^{\mathbf{a}}-r_{\mathbf{s}_2}^{\mathbf{a}}\right|+\\
			&c W_{1}\left(\mathcal{P}(\cdot \mid \mathbf{s}_1, \mathbf{a}), \mathcal{P}\left(\cdot \mid \mathbf{s}_2, \mathbf{a}\right);d\right))
		\end{aligned}
		$$ \end{definition}
	\noindent The definition of bisimulation metrics consists of a distance between rewards and distance between state distributions. The latter is computed by Wasserstein metric $W_1$, which denotes the cost of transport mass from one distribution to another \cite{villani2021topics}.
	
	\subsection{Data-regularized Q for pixel-RL}
	Data-Regularized Q (DrQ) \cite{drq} uses an optimality invariant state transformation $f$, which preserves the $Q$ values to augment the training data when optimizing the $Q$ functions. DrQ generates $K$ samples for each raw image by applying random transformations and estimates the $Q$ function and $Q$ target by averaging over the augmented images. The addition of the augmentation method improves the performance of SAC \cite{sac} in DeepMind Control Suite \cite{dmc}.
	
	Recently, \citet{drqv2} propose DrQ-v2 that combines DrQ and TD3 \cite{td3}. DrQ-v2 replaces the maximum entropy term \cite{sac} with a scheduled noise for adjustable exploration and use target policy smoothing to reduce the bias of $\mathrm{Q}$ functions. Moreover, DrQ-v2 uses multi-step TD to learn value functions. Experiments show that DrQ-v2 achives more efficient learning than DrQ in complex visual control tasks.
	
	\subsection{SwAV and Proto-RL}\label{bg:proto}
	
	To learn visual features without supervision, \citet{swav} proposes an clustering-based method, Swapping Assignments between different views (SwAV), which computes cluster assignments in an online fashion and enforces consistency between cluster assignments produced for different augmentations of the same image.
	
	Specifically, given two image features $\mathbf{z}_s$ and $\mathbf{z}_t$ from two different augmentations of a same image, they compute their codes $\mathbf{q}_s$ and $\mathbf{q}_t$ by matching the features to $K$ learnable prototypes $\mathbf{C}=[\mathbf{c}_1,\ldots,\mathbf{c}_K]$. Then they setup a swapped prediction problem with the following loss function:
	\[L\left(\mathbf{z}_{t}, \mathbf{z}_{s}\right)=\ell\left(\mathbf{z}_{t}, \mathbf{q}_{s}\right)+\ell\left(\mathbf{z}_{s}, \mathbf{q}_{t}\right).\]where the $\ell$ represents the cross entropy loss between the code and the probability obtained by taking a softmax of the dot products of $\mathbf{z}$ and all prototypes.
	
	To ensure equal partitioning of the prototypes across all embeddings, given $B$ feature vectors $\mathbf{Z}=[\mathbf{z}_1,\ldots,\mathbf{z}_B]$, the codes $\mathbf{Q}=[\mathbf{q}_1,\ldots,\mathbf{q}_B]$ are calculated by 
	\begin{equation}\label{codes}
		\mathbf{Q}=\operatorname{Diag}(\mathbf{u}) \exp \left(\frac{\mathbf{C}^{\top} \mathbf{Z}}{\varepsilon}\right) \operatorname{Diag}(\mathbf{v}),
	\end{equation}
	where $\mathbf{u}\in\mathbb{R}^{K}$ and $\mathbf{v}\in\mathbb{R}^{B}$ are renormalization vectors, and $\varepsilon$ is a parameter that controls the smoothness of the codes. The renormalization vectors are computed using a small number of matrix multiplications using the iterative Sinkhorn-Knopp algorithm \cite{sinkhorn}.
	
	Proto-RL \cite{proto} draws inspiration from SwAV and adapts prototypical representation into RL setting. Different from SwAV, they enforce the consistency of cluster assignments from the predicted transition and the ground truth transition, which encourages the representations to preserve one-step dynamical similarity.
	They also demonstrate that the learned prototypes form the basis of latent space and thus induce efficient downstream exploration.

	\section{Methods}
	\begin{figure}[t]
		\vskip 0.1in
		\begin{center}
			\centerline{\includegraphics[width=1\columnwidth]{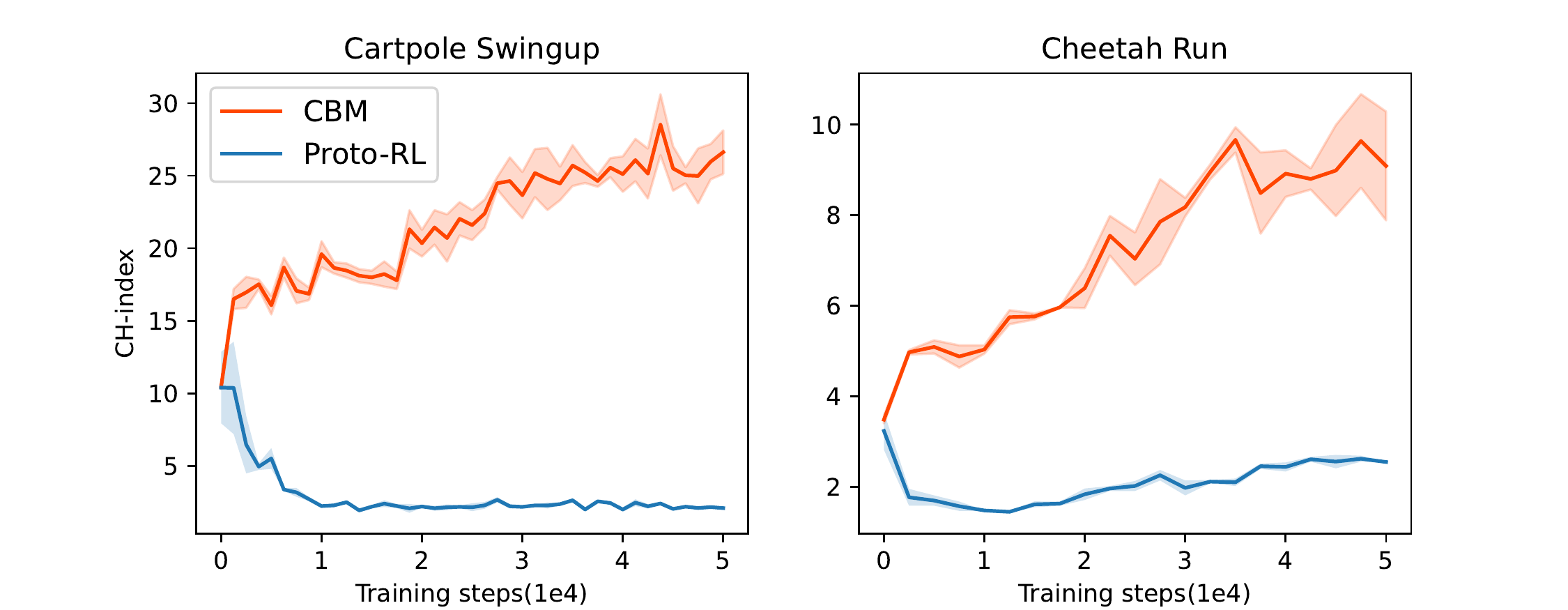}}
			\caption{Measure clustering quality with respect to physical states. The agents are trained with multiple distractions}
			\label{fig:cluster_analyze}
		\end{center}
		\vskip -0.4in
	\end{figure}
	In this section, we propose \textbf{c}lustering with \textbf{b}isimulation \textbf{m}etrics (CBM)---which groups observations in the latent space with bisimulation metrics---to learn robust representations against distractions. 
	Specifically, we first measure the clustering quality of clustering-based methods to study whether they capture task-relevant information and filter out task-irrelevant information. 
	Secondly, we propose to incorporate the task-specific properties in RL problems by introducing bisimulation metrics into the clustering process.
	Then, we propose an approach to approximate the bisimulation metrics in the clustering process. Finally, we describe the overall algorithm and implementation details of CBM.  
	
	\subsection{Measuring Clustering Quality in RL}
	\begin{figure*}[t]
		\centering
		\centerline{\includegraphics[width=0.95\textwidth]{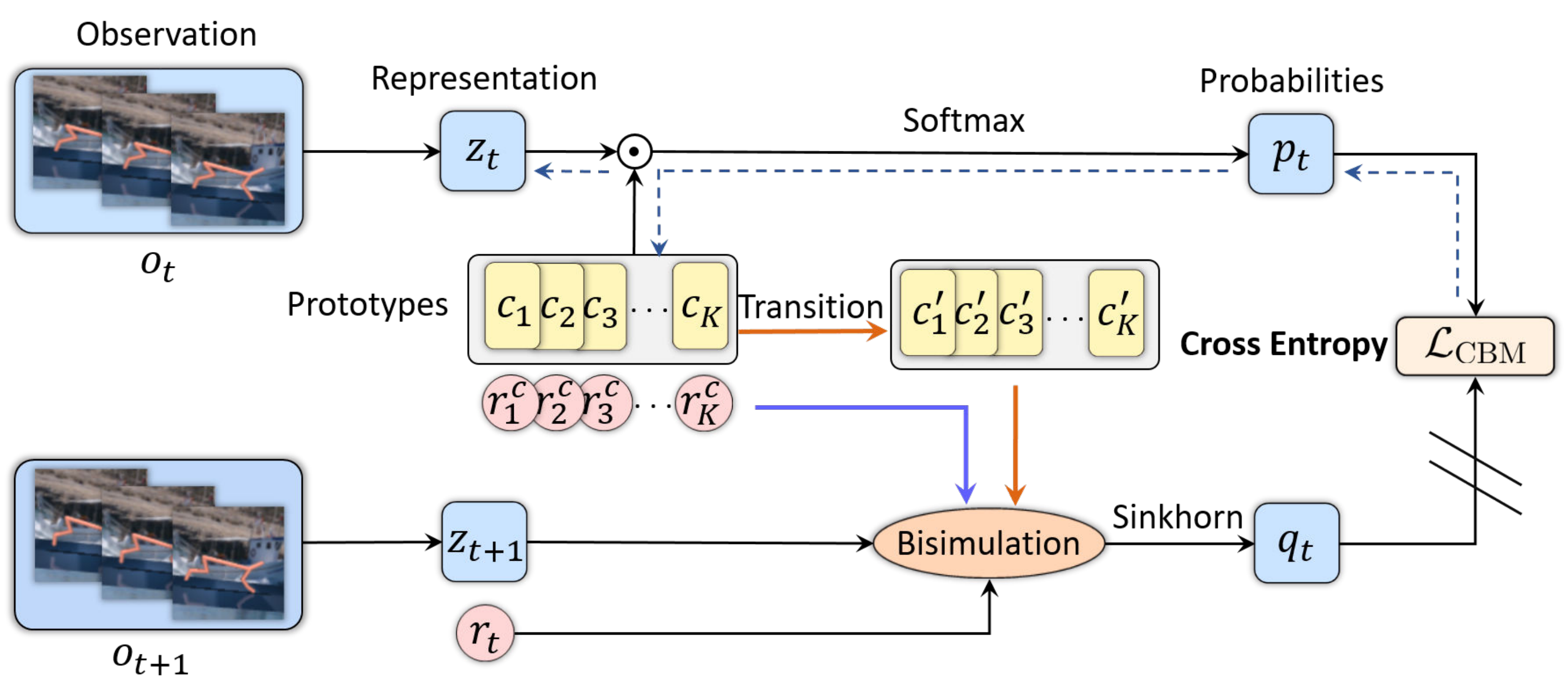}}
		\caption{
			The framework of CBM. The dashed line represents the backpropagation of gradients. CBM enforces consistency between predicted cluster assignments and target cluster assignments. We update prototypes' rewards and transitions by taking the average of batch rewards and using the learned transition model, respectively.}
		\label{fig:framework}
		\vskip -0.4cm
	\end{figure*}
	In this part, we take Proto-RL \cite{proto} for example to study whether previous clustering-based methods learn robust representations against distractions in visual RL tasks.
	A robust representation for visual RL should capture task-relevant information and be invariant to task-irrelevant distractions. To capture task-relevant information, clustering-based methods should discriminate observations that correspond to different physical states. Moreover, to filter out task-irrelevant information, clustering-based methods should assign observations corresponding to similar states to the same group and encode them as neighboring points in the latent space. Therefore, we can evaluate the robustness against distractions by the Calinski-Harabasz index (CH index) \cite{chindex} with respect to the low-dimensional physical states. 
	The CH index is the ratio of between-clusters dispersion and within-cluster dispersion. A higher value of the CH index indicates that the clusters are more dense and well separated.
	Specifically, we assign each observation to the cluster whose prototype is closest in the latent space. And we obtain the corresponding physical states from the simulator.
	
The results in Figure \ref{fig:cluster_analyze} show that Proto-RL struggles to learn robust representations while CBM achieves robustness against distractions. We observe that Proto-RL and CBM almost start from equal clustering quality because of the random initialization. However, as the training continues, CBM consistently improves the clustering quality in terms of states while Proto-RL sticks to a low value of CH-index. A potential reason is that the dynamical information used in Proto-RL to cluster observations contains task-irrelevant information about distractions and thus is not task-specific.
	
	\subsection{Clustering with Bisimulation Metrics}
In RL, reward signals provide essential task-specific information. As stated in Section \ref{bisim}, bisimulation metrics recursively utilize the reward signal  and quantify how behaviourally different two observations are. Therefore, we propose to calculate the cluster assignments according to the bisimulation metrics to the prototypes. The following proposition shows that a cluster formed in this manner will contain observations with close expected returns.
	\begin{proposition}
		Let $V^*$ be the optimal value function for a given discount factor $\gamma$. If $c\geq \gamma$, given the bisimulation metric $d$ and a prototype's underlying state $\mathbf{s}_c$, then for any two states $\mathbf{s}_1,\mathbf{s}_2$ such that $d(\mathbf{s}_1,\mathbf{s}_c)<\epsilon, d(\mathbf{s}_2,\mathbf{s}_c)<\epsilon$, we have
		$$|V^{*}(\mathbf{s}_1)-V^{*}(\mathbf{s}_2)|<\frac{2\epsilon}{1-c}.$$
	\end{proposition}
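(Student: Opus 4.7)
The plan is to reduce the statement to the standard Lipschitz-continuity property of the optimal value function with respect to the bisimulation metric, and then apply the triangle inequality. Concretely, I would prove the following intermediate lemma: if $c \geq \gamma$, then $V^{*}$ is $\frac{1}{1-c}$-Lipschitz with respect to $d$, i.e.\ for all $\mathbf{s},\mathbf{s}' \in \mathcal{S}$,
\[
|V^{*}(\mathbf{s}) - V^{*}(\mathbf{s}')| \;\leq\; \frac{d(\mathbf{s},\mathbf{s}')}{1-c}.
\]
Once this lemma is available, the proposition follows immediately: by the triangle inequality for the bisimulation pseudometric, $d(\mathbf{s}_1,\mathbf{s}_2) \leq d(\mathbf{s}_1,\mathbf{s}_c) + d(\mathbf{s}_c,\mathbf{s}_2) < 2\epsilon$, so $|V^{*}(\mathbf{s}_1) - V^{*}(\mathbf{s}_2)| < 2\epsilon/(1-c)$.

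To prove the intermediate lemma, I would use a fixed-point / induction-on-Bellman-iterates argument. Starting from $V_{0} \equiv 0$ and defining $V_{k+1}(\mathbf{s}) = \max_{a}\{r_{\mathbf{s}}^{\mathbf{a}} + \gamma\,\mathbb{E}_{\mathbf{s}' \sim \mathcal{P}(\cdot\mid \mathbf{s},\mathbf{a})}[V_{k}(\mathbf{s}')]\}$, I would show by induction that each $V_{k}$ is $\frac{1}{1-c}$-Lipschitz with respect to $d$. The inductive step uses (i) $|\max_a f(a) - \max_a g(a)| \leq \max_a |f(a) - g(a)|$, (ii) the triangle inequality on absolute values, and (iii) the Kantorovich-Rubinstein dual formulation of $W_{1}$, which gives
\[
\bigl|\mathbb{E}_{\mathbf{s}'\sim \mathcal{P}(\cdot\mid \mathbf{s}_1,\mathbf{a})}[V_k(\mathbf{s}')] - \mathbb{E}_{\mathbf{s}'\sim \mathcal{P}(\cdot\mid \mathbf{s}_2,\mathbf{a})}[V_k(\mathbf{s}')]\bigr| \;\leq\; \tfrac{1}{1-c}\, W_1\!\bigl(\mathcal{P}(\cdot\mid \mathbf{s}_1,\mathbf{a}),\mathcal{P}(\cdot\mid \mathbf{s}_2,\mathbf{a});d\bigr),
\]
since $V_k$ is $\frac{1}{1-c}$-Lipschitz under $d$. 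Combining these and using $\gamma \leq c$ yields exactly $\max_a\{|r_{\mathbf{s}_1}^{\mathbf{a}} - r_{\mathbf{s}_2}^{\mathbf{a}}| + \tfrac{c}{1-c}W_1(\ldots)\} = \tfrac{1}{1-c} d(\mathbf{s}_1,\mathbf{s}_2)$, closing the induction. Passing to the limit $k \to \infty$ (using uniform convergence of $V_k \to V^{*}$ by the contraction property of the Bellman optimality operator), the Lipschitz constant is preserved.

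The main technical obstacle is the fixed-point / Lipschitz step: getting the arithmetic to align so that the recursive bound produces exactly the factor $\tfrac{1}{1-c}$, which is where the assumption $c \geq \gamma$ is used crucially (any smaller $c$ would give a weaker Wasserstein coupling and the induction would not close). Once that obstacle is handled, the remainder of the proof is a one-line application of the triangle inequality. I would finally note that, strictly speaking, the bisimulation metric only satisfies the triangle inequality as a pseudometric, but this is enough for the bound; the standard reference for both facts is Ferns et al., which is already cited in the preliminaries.
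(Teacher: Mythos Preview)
Your proposal is correct and follows essentially the same strategy as the paper: establish that $V^{*}$ is $\tfrac{1}{1-c}$-Lipschitz with respect to $d$ via induction on Bellman iterates together with Kantorovich--Rubinstein duality, then finish with a triangle inequality. The only cosmetic differences are that the paper runs a coupled induction on the pairs $(V_i,d_i)$ (the bisimulation-metric iterates) rather than on $V_k$ alone against the fixed limit metric $d$, and it applies the triangle inequality to $|V^{*}(\mathbf{s}_1)-V^{*}(\mathbf{s}_2)|$ through $\mathbf{s}_c$ rather than to $d(\mathbf{s}_1,\mathbf{s}_2)$; neither change affects the substance of the argument.
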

	\noindent Proof in Appendix. The proposition extends Theorem 5.1 in \citet{ferns2004metrics} to the clustering setting. It shows that if two states are close enough to the same prototype in terms of bisimulation distance, then their value difference is small. 
	In CBM, we calculate a bisimulation distance matrix $\mathbf{D}={(d_{ij})}_{K\times B}$ from the batch of observation encodings $\{\mathbf{z}_i\}_{i=1}^B$ and a set of prototypes $\{\mathbf{c}_j\}_{j=1}^K$, where $d_{ij} = d(\mathbf{z}_i, \mathbf{c}_j)$. Then we employ the Sinkhorn-Knopp clustering procedure to ensure equal partitioning of the prototypes across all embeddings and produce the target codes $\mathbf{Q}=[\mathbf{q}_1,\ldots,\mathbf{q}_B]$. Different from Equation \ref{codes}, we replace the dot product with the negative distance matrix.
	
	\begin{equation}\label{our_codes}
		\mathbf{Q}=\operatorname{Diag}(\mathbf{u}) \exp \left(\frac{-\mathbf{D}}{\varepsilon}\right) \operatorname{Diag}(\mathbf{v}).
	\end{equation}
	Consequently, we calculate soft cluster assignments to the set of prototypes, where we assign a high probability to the prototype that is behaviorally similar to the observation.
	
	\subsection{Approximating Bisimulation Metrics in CBM}
	
	\begin{figure*}[t]
		\centering
		\centerline{\includegraphics[width=0.9\textwidth]{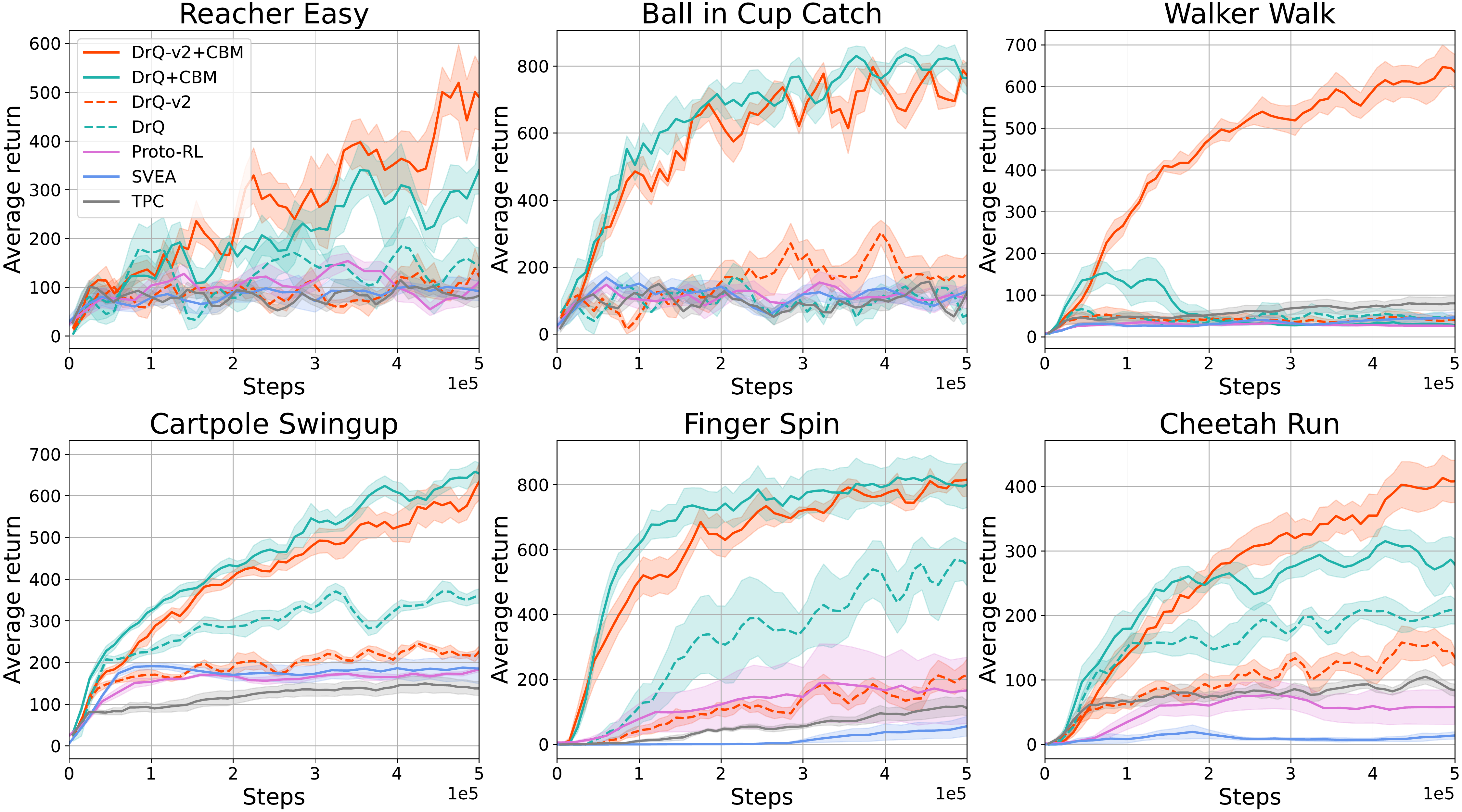}}
		\caption{Performance under multiple distractions. The solid curves correspond to the mean and the shaded region to the standard deviation. CBM significantly improves the sample efficiency for both DrQ and DrQ-v2.}
		\label{fig:easy}
		\vskip -0.4cm
	\end{figure*}
	In this part, we introduce the approximation to the bisimulation metrics between observations and prototypes. The bisimulation metrics consist of differences in rewards and transition. We can store the rewards and transitions of the observations in the replay buffer. However, the prototypes are representations in the latent space. Therefore, we cannot directly obtain their corresponding rewards and transitions from the environment.
	
	To tackle this problem, we first define the rewards of the prototypes. We approximate the prototypes' rewards by taking the weighted average over the rewards in a batch $\mathbf{r} = [r_1,\ldots,r_B]^{\top}$. 
	We estimate the prototype reward by
	\begin{equation}\label{rew-average}
		\hat{{r_k}}^{\mathbf{c}} = \frac{K}{B}\mathbf{w}_k^{\top}\mathbf{r}
	\end{equation}
	where the weight $\mathbf{w}_k=[q_{k1},\ldots,q_{kB}]^{\top}$ is taken from the target codes. 
	Since the target codes should satisfy $\sum_{j=1}^{B} q_{kj}=\frac{B}{K}$ after the Sinkhorn-Knopp clustering procedure, we multiply the weighted average by a factor $K/B$ for normalization. 
	At the beginning of the training, we initialize the prototype reward by randomly sampling from the replay buffer. We update the prototype reward at each training step by exponential moving average.
	\begin{equation}\label{rew-polyak}
		{r_k}^{\mathbf{c}}\leftarrow\beta \hat{{r_k}}^\mathbf{c}+(1-\beta) {r_k}^{\mathbf{c}}
	\end{equation}
	
	Secondly, to obtain the transitions of the prototypes, we train a deterministic latent dynamics model $\mathcal{P}$ concurrently. We obtain the prototype transition $\mathbf{c}_k^{\prime}$ from prediction of the model $\mathcal{P}$ given $\mathbf{c}_k$. Since prior work has shown that minimizing a quadratic loss is prone to collapse \cite{spr,deepmdp}, we train the latent dynamics model by one-step contrastive predictive coding (CPC) \cite{cpc}.
	
	Finally, we approximate the bisimulation distance $d(\mathbf{z}_i, \mathbf{c}_j)$ as
	$\|r_i-r_j^{\mathbf{c}}\| + \|\mathbf{z}_i^{\prime}-\mathbf{c}_j^{\prime}\|$ \cite{castro2020scalable} using the prototypes' rewards and transitions,  
	and thus we can compute the distance matrix $\mathbf{D}$ in Equation \ref{our_codes}. 
	\subsection{Algorithm}
In this part, we describe the training process of CBM. Our algorithm repeatedly performs the following steps.
Firstly, we sample a batch of data $\{\mathbf{o}_i,\mathbf{a}_i,r_{i},\mathbf{o}_{i}^{\prime},\}_{i=1}^B$ and map visual observations $\mathbf{o}_i,\mathbf{o}_{i}^{\prime}$ into latent states $\mathbf{z}_i,\mathbf{z}_{i}^{\prime}$ using the encoder.
Then, we predict the cluster assignment $\mathbf{p}_i$ by projecting $\mathbf{z}_i$ into a set of prototypes $\{\mathbf{c}_j\}_{j=1}^K$ and taking the softmax
\[\mathbf{p}_{i}=\frac{\exp \left(\frac{1}{\tau} \mathbf{z}_{i}^{\top} \mathbf{c}_{k}\right)}{\sum_{j=1}^K \exp \left(\frac{1}{\tau} \mathbf{z}_{i}^{\top} \mathbf{c}_{j}\right)},\]
where $\tau$ is a temperature parameter.
Next, we compute the target cluster assignment $\mathbf{q}_i$ by Equation \ref{our_codes}. To approximate the bisimulation metrics, we obtain prototypes' rewards by Equation \ref{rew-average}, \ref{rew-polyak} and transitions by using the latent dynamics model, respectively. 
Finally, we update the components in CBM. Specifically, we update prototypical representations $\{\mathbf{c}_j\}_{j=1}^K$ by optimizing the cross entropy loss 
\[\mathcal{L}_{\text{CBM}}=-\sum_{k=1}^B \mathbf{q}_{k} \log \mathbf{p}_{k}.\] 
We update the dynamics model by one-step contrastive predicting coding. We allow the gradients from $\mathcal{L}_{\text{CBM}}$ and dynamics loss to propagate to the encoder. Note that CBM is compatible with most visual RL algorithms. We illustrate the framework of CBM in Figure \ref{fig:framework}, and we provide the pseudocodes of CBM in the Appendix.
	\section{Experiments}
	In this section, we discuss the empirical results of CBM. We combine CBM with two popular algorithms, DrQ \cite{drq} and DrQ-v2 \cite{drqv2}, which improve SAC and TD3 for visual control tasks. To evaluate robustness against distractions, we test all methods on multiple and single distraction settings. Then, we analyze the impact of each component in CBM. Finally, we visualize the embedding space of CBM to demonstrate that CBM achieves task-specific clustering. All experiments report results over five seeds.
	
	\begin{figure*}[t]
		\centering
		\centerline{\includegraphics[width=1\textwidth]{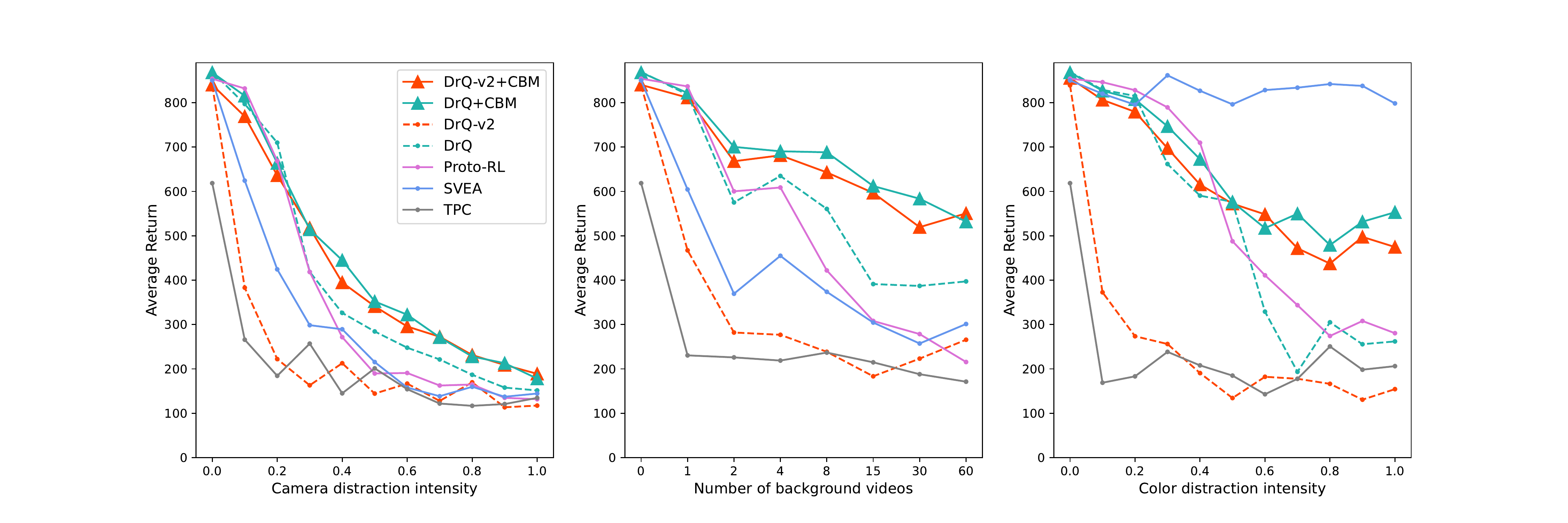}}
		\caption{Evaluation results after training with single distraction for 5e5 environment steps. CBM achieves state-of-the-art performance under the camera and background distraction. }
		\label{fig:single}
		\vskip -0.4cm
	\end{figure*}
	
	\noindent\textbf{Control Tasks} 
	We evaluate all agents on a challenging benchmark for vision-based control, Distracting Control Suite (DCS) \cite{dcs}. DCS extends DeepMind Control (DMC) \cite{dmc} with three kinds of visual distractions (background, color and camera pose). In multiple distractions settings, we use DCS "easy" setting, where the background images are sampled from 4 videos, and the scale for camera and color distraction is 0.1.
	
	\noindent\textbf{Baseline Methods} 
	Besides DrQ and DrQ-v2, we compare CBM with three state-of-the-art methods including Proto-RL \cite{proto}, which learn representations by grouping observations with dynamics information, SVEA \cite{svea}, which regularizes the representations by utilizing strong augmentation like random convolution, TPC \cite{tpc}, which contrastively learns a latent model for planning.
	
	\noindent\textbf{Architectures} When combining CBM with DrQ and DrQ-v2, we use a shared convolutional encoder followed with a linear projector with 50 outputs. CBM learns K=128 prototypes, each parameterized as a 50-dimensional vector.
	
	\subsection{Learning Control with Multiple Distractions}\label{sec:exp-multi-distract}
	In this part, we evaluate methods under multiple distractions in terms of sample efficiency and asymptotic performance. \citet{dcs} find that multiple distractors have a compounding effect: combined distractors degrade performance more than individually. The results in Figure \ref{fig:easy} show that CBM significantly improves the sample efficiency and asymptotic performance of DrQ and DrQ-v2. Moreover, CBM outperforms all baselines in six environments. In some tasks like finger spin or cheetah run, DrQ and DrQ-v2 are able to learn reasonable behaviour, but they require more samples to train the agent. The poor performance of Proto-RL, SVEA, and TPC indicate clustering with dynamics information, utilizing strong augmentation and learning latent dynamics contrastively struggle to overcome the challenging compounding effect of the multiple distractions. 
	\subsection{Learning Control with Single Distractions}
	In this part, we train agents with each single distraction in different magnitude and evaluate them at the end of training. We present the results in Figure \ref{fig:single}. CBM significantly improves the performance of both DrQ and DrQ-v2, especially under strong distractions. 
	We observe that Proto-RL performs fairly well when the distraction is mild, while its performance declines dramatically as the distraction scale increases. 
	This result indicates that clustering with dynamics information can help to handle a mild distraction but performs poorly under stronger distraction. 
	TPC only achieves about 600 average returns in the standard setting without distractions. We argue that its poor performance partly comes from the low sample efficiency of model training. 
	SVEA is quite robust to color distraction. 
	However, it suffers from severe degradation under the camera and background distraction. A potential reason is that the particular augmentation choice in SVEA (random convolution) can simulate the color variations well but can not approximate the camera and background variations. Therefore, we conclude that the data augmentation method can perform well with proper augmentation choice according to the domain knowledge but perform much worse than our method CBM when such domain knowledge is missing.
	\begin{figure}[t]
		\vskip 0.1in
		\begin{center}
			\centerline{\includegraphics[width=1\columnwidth]{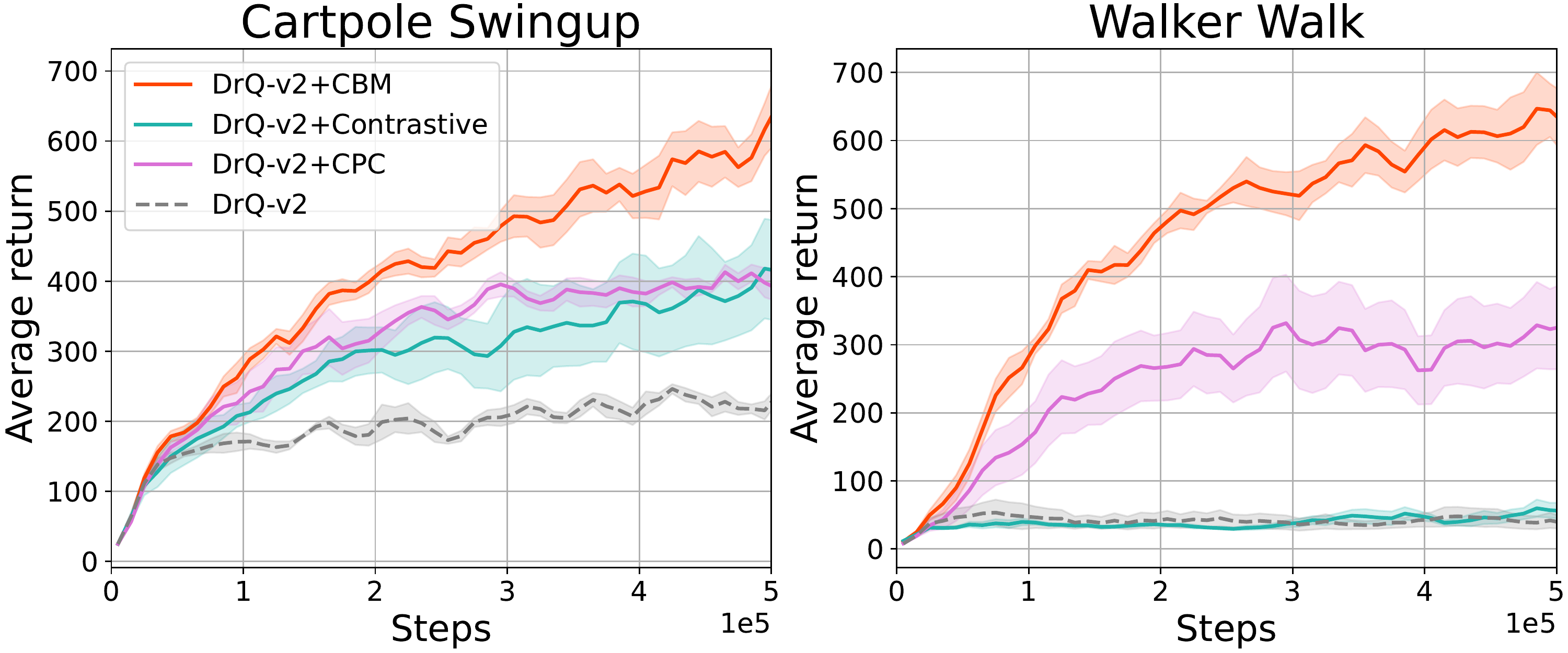}}
			\caption{CBM works better than training a latent model alone or contrastively utilizing the bisimulation metrics.}
			\label{ablation}
		\end{center}
		\vskip -0.3in
	\end{figure}
	\subsection{Ablation Study}\label{sec:abl}
	\begin{figure*}[t]
		\centering
		\centerline{\includegraphics[width=1\textwidth]{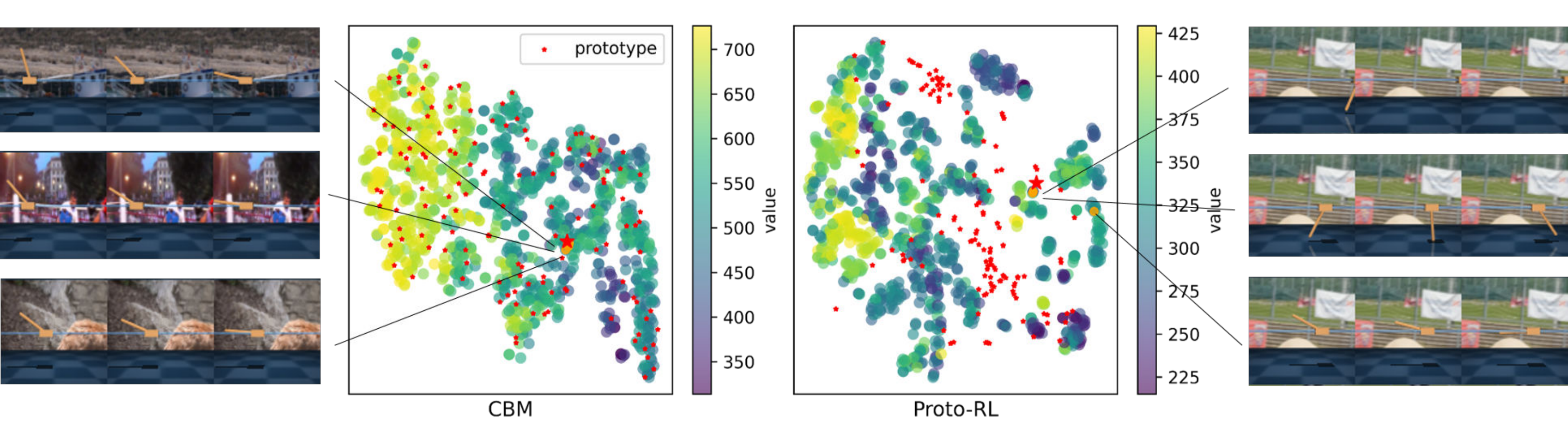}}
		\caption{t-SNE visualization of representations learned by CBM and Proto-RL. The color represents the state value, the red points represent the prototypes. We show an example prototype and three encodings that have biggest assignment probability to it. The cluster formed by CBM is more behaviorally similar and the prototypes disperse more uniformly in the latent space.}
		\label{fig:tsne&clu}
		\vskip -0.4cm
	\end{figure*}
	\begin{figure}[!ht]
		\vskip 0.05in
		\begin{center}
			\centerline{\includegraphics[width=1\columnwidth]{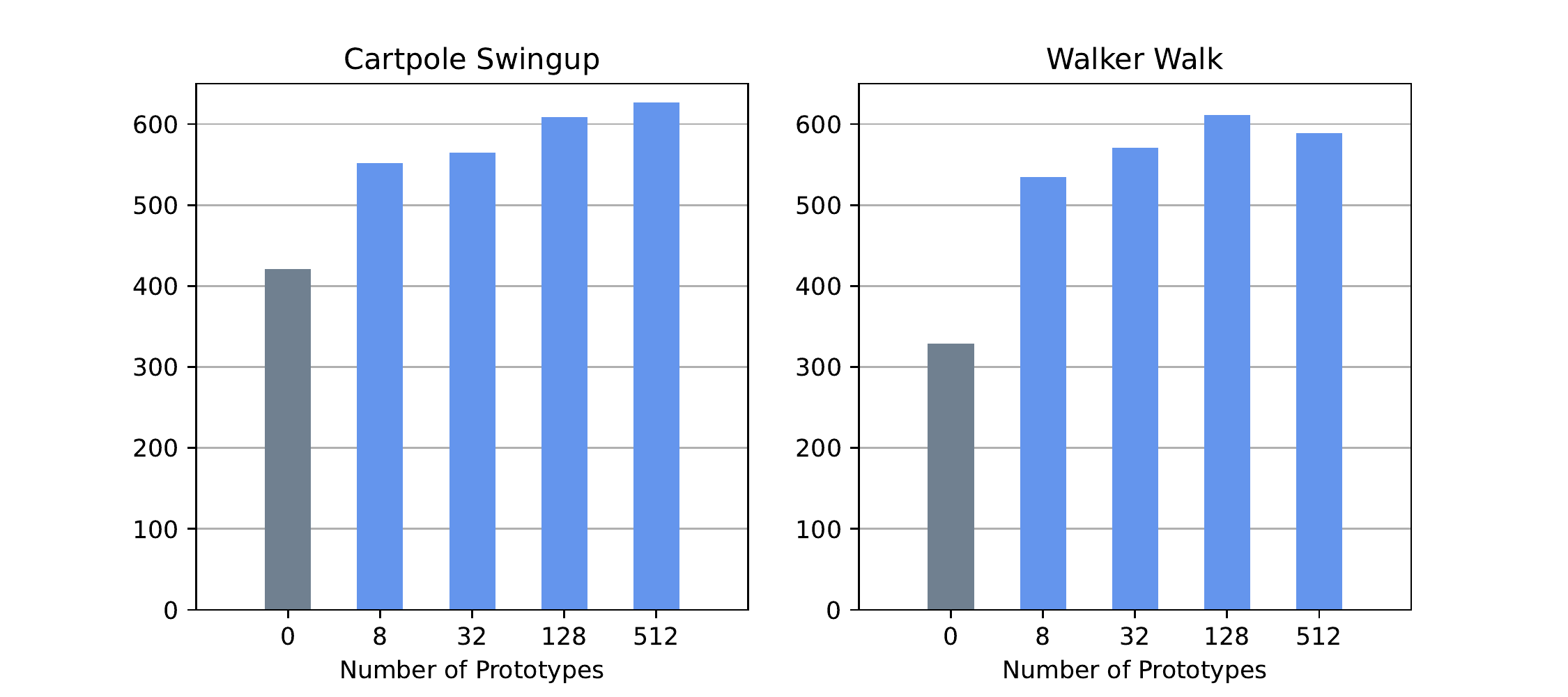}}
			\caption{Performance of CBM with various number of prototypes. Zero prototypes represent the implementation that only train the transition model.}
			\label{proto_k}
		\end{center}
		\vskip -0.3in
	\end{figure}
	This part analyzes the impact of clustering-based objectives, the transition model, and the number of prototypes. We conduct the experiments on cartpole swingup and walker walk in the multiple distraction setting.
	
	\noindent\textbf{Comparison with contrastive objectives} 
	We compare the clustering-based objective in CBM with a contrastive-based objective. Concretely, for every observation encoding, we select the positive sample as the closest observations from the sampled batch with respect to bisimulation metrics. We keep all hyperparameters fixed and approximate the bisimulation metrics in the same way as in CBM. We optimize the encoder by the InfoNCE loss \cite{cpc} and the dynamics loss. As shown in Figure \ref{ablation}, we observe that the contrastive-based method performs poorly on the multiple distraction setting, while CBM exploits the bisimulation metrics better and is much more sample-efficient. A potential reason is that the contrastive-based method is sensitive to negative samples and requires a large batch size.
	
	\noindent\textbf{Impact of transition model} To approximate the bisimulation metrics, we concurrently train a latent model by one-step CPC. Previous work has shown that learning latent dynamics can help accelerate representation learning and discriminate task-relevant information \cite{tpc}. To study the separate effect of training a latent model in our method, we update the encoder by training the transition model alone. From Figure \ref{ablation}, we investigate that training the latent model improves the performance of DrQ-v2. Adding the clustering loss $\mathcal{L}_{\text{CBM}}$ can further improve the sample efficiency and asymptotic performance, which proves the effectiveness of the clustering.
	
	\noindent\textbf{Impact of number of prototypes} We test CBM with a different number of prototypes ranging from 0 to 512. From Figure \ref{proto_k}, we find that even a small number of prototypes, such as eight, can significantly improve performance. We also observe that further increasing $K$ to 128 can make a slight improvement, while a larger $K$ may not always bring further improvement. These results suggest that the number of prototypes has little impact as long as it is "enough". Since using more prototypes increases the computational cost, we recommend training CBM with 128 prototypes.
	
	\subsection{Visualization Analysis}
	
	In this part, we visualize the embeddings trained with background distraction on cartpole swingup. We apply t-SNE algorithms on randomly selected observation encodings and the set of prototypical representations. Figure \ref{fig:tsne&clu} shows that CBM maps observations with similar values to neighboring regions while Proto-RL does not. This indicates that CBM effectively captures task-relevant information.
	
	To visualize the clustering results, we select a prototype and present the three nearest observations in terms of cosine similarity. We observe that observations obtained from the embedding of CBM exhibit similar behavior, while those obtained from Proto-RL behave quite differently. This observation demonstrates that CBM achieves task-specific clustering. We put more clustering results of CBM from different environments in the Appendix.
	
	We further color points in the t-SNE plot with their background videos.
The figure is shown in Appendix D. 
We find that CBM encodes visual observations with similar backgrounds uniformly in the latent space, while Proto-RL embeds them into neighboring regions. The results prove that CBM filters out task-irrelevant information.
	
	\begin{figure}[t]
		\vskip 0.1in
		\begin{center}
			\centerline{\includegraphics[width=0.9\columnwidth]{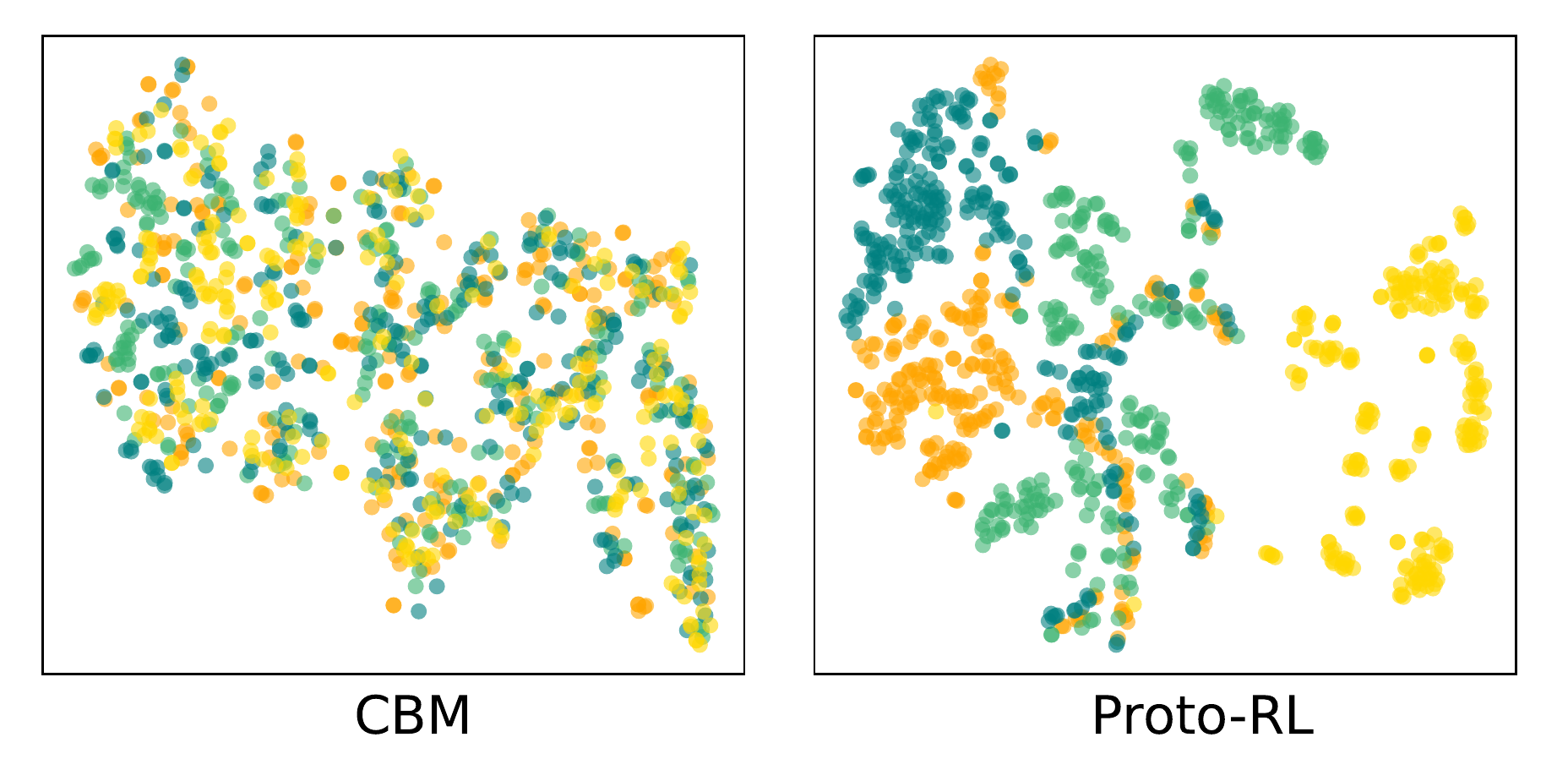}}
			\caption{t-SNE visualization of representations learned by CBM and Proto-RL. The color represents different background videos of the observations. }
			\label{tsne_bg}
		\end{center}
		\vskip -0.35in
	\end{figure}
	\section{Conclusion}
	Learning robust representations is critical for sample-efficient reinforcement learning from images. In this paper, we propose CBM, a novel clustering-based method that learns robust representations by grouping visual observations with bisimulation metrics. By incorporating the properties of RL tasks, CBM effectively captures task-relevant information and filters out task-irrelevant information. Experiments demonstrate that CBM significantly improves the sample efficiency of popular visual RL algorithms and achieves state-of-the-art performance on both multiple and single distraction settings.
\section{Acknowledgments}
We would like to thank all the anonymous reviewers for their insightful comments. This work was supported in part by National Science Foundations of China grants U19B2026, U19B2044, 61836006, and 62021001, and the Fundamental Research Funds for the Central Universities grant WK3490000004.
	\bibliography{aaai23}

\begin{thebibliography}{51}
\providecommand{\natexlab}[1]{#1}

\bibitem[{Agarwal et~al.(2021)Agarwal, Machado, Castro, and Bellemare}]{pse}
Agarwal, R.; Machado, M.~C.; Castro, P.~S.; and Bellemare, M.~G. 2021.
\newblock Contrastive Behavioral Similarity Embeddings for Generalization in
  Reinforcement Learning.
\newblock In \emph{International Conference on Learning Representations}.

\bibitem[{Asano, Rupprecht, and Vedaldi(2020)}]{selflabel}
Asano, Y.~M.; Rupprecht, C.; and Vedaldi, A. 2020.
\newblock Self-labelling via simultaneous clustering and representation
  learning.
\newblock In \emph{International Conference on Learning Representations
  (ICLR)}.

\bibitem[{Ba, Kiros, and Hinton(2016)}]{ba2016layer}
Ba, J.~L.; Kiros, J.~R.; and Hinton, G.~E. 2016.
\newblock Layer normalization.
\newblock \emph{arXiv preprint arXiv:1607.06450}.

\bibitem[{Bellman(1957)}]{bellman1957markovian}
Bellman, R. 1957.
\newblock A Markovian decision process.
\newblock \emph{Journal of mathematics and mechanics}, 679--684.

\bibitem[{Cali{\'n}ski and Harabasz(1974)}]{chindex}
Cali{\'n}ski, T.; and Harabasz, J. 1974.
\newblock A dendrite method for cluster analysis.
\newblock \emph{Communications in Statistics-theory and Methods}, 3(1): 1--27.

\bibitem[{Caron et~al.(2018)Caron, Bojanowski, Joulin, and Douze}]{deepcluster}
Caron, M.; Bojanowski, P.; Joulin, A.; and Douze, M. 2018.
\newblock Deep Clustering for Unsupervised Learning of Visual Features.
\newblock \emph{european conference on computer vision}.

\bibitem[{Caron et~al.(2020)Caron, Misra, Mairal, Goyal, Bojanowski, and
  Joulin}]{swav}
Caron, M.; Misra, I.; Mairal, J.; Goyal, P.; Bojanowski, P.; and Joulin, A.
  2020.
\newblock Unsupervised learning of visual features by contrasting cluster
  assignments.
\newblock \emph{Advances in Neural Information Processing Systems}, 33:
  9912--9924.

\bibitem[{Castro(2020)}]{castro2020scalable}
Castro, P.~S. 2020.
\newblock Scalable methods for computing state similarity in deterministic
  markov decision processes.
\newblock In \emph{Proceedings of the AAAI Conference on Artificial
  Intelligence}, volume~34, 10069--10076.

\bibitem[{Chen et~al.(2020)Chen, Kornblith, Norouzi, and
  Hinton}]{chen2020simple}
Chen, T.; Kornblith, S.; Norouzi, M.; and Hinton, G. 2020.
\newblock A simple framework for contrastive learning of visual
  representations.
\newblock In \emph{International conference on machine learning}, 1597--1607.
  PMLR.

\bibitem[{Cuturi(2013)}]{sinkhorn}
Cuturi, M. 2013.
\newblock Sinkhorn distances: Lightspeed computation of optimal transport.
\newblock \emph{Advances in neural information processing systems}, 26.

\bibitem[{Deng, Jang, and Ahn(2021)}]{dreamerpro}
Deng, F.; Jang, I.; and Ahn, S. 2021.
\newblock DreamerPro: Reconstruction-Free Model-Based Reinforcement Learning
  with Prototypical Representations.
\newblock \emph{arXiv preprint arXiv:2110.14565}.

\bibitem[{Fan et~al.(2021)Fan, Wang, Huang, Yu, Fei-Fei, Zhu, and
  Anandkumar}]{secant}
Fan, L.; Wang, G.; Huang, D.-A.; Yu, Z.; Fei-Fei, L.; Zhu, Y.; and Anandkumar,
  A. 2021.
\newblock SECANT: Self-Expert Cloning for Zero-Shot Generalization of Visual
  Policies.
\newblock \emph{International Conference on Machine Learning}.

\bibitem[{Ferns, Panangaden, and Precup(2004)}]{ferns2004metrics}
Ferns, N.; Panangaden, P.; and Precup, D. 2004.
\newblock Metrics for Finite Markov Decision Processes.
\newblock In \emph{UAI}, volume~4, 162--169.

\bibitem[{Ferns, Panangaden, and Precup(2011)}]{ferns2011}
Ferns, N.; Panangaden, P.; and Precup, D. 2011.
\newblock Bisimulation metrics for continuous Markov decision processes.
\newblock \emph{SIAM Journal on Computing}, 40(6): 1662--1714.

\bibitem[{Fu et~al.(2021)Fu, Yang, Agrawal, and Jaakkola}]{tia}
Fu, X.; Yang, G.; Agrawal, P.; and Jaakkola, T. 2021.
\newblock Learning task informed abstractions.
\newblock In \emph{International Conference on Machine Learning}.

\bibitem[{Fujimoto, Hoof, and Meger(2018)}]{td3}
Fujimoto, S.; Hoof, H.; and Meger, D. 2018.
\newblock Addressing function approximation error in actor-critic methods.
\newblock In \emph{International Conference on Machine Learning}.

\bibitem[{Gelada et~al.(2019)Gelada, Kumar, Buckman, Nachum, and
  Bellemare}]{deepmdp}
Gelada, C.; Kumar, S.; Buckman, J.; Nachum, O.; and Bellemare, M.~G. 2019.
\newblock Deepmdp: Learning continuous latent space models for representation
  learning.
\newblock In \emph{International Conference on Machine Learning}.

\bibitem[{Haarnoja et~al.(2018)Haarnoja, Zhou, Abbeel, and Levine}]{sac}
Haarnoja, T.; Zhou, A.; Abbeel, P.; and Levine, S. 2018.
\newblock Soft actor-critic: Off-policy maximum entropy deep reinforcement
  learning with a stochastic actor.
\newblock In \emph{International conference on machine learning}.

\bibitem[{Hafner et~al.(2019{\natexlab{a}})Hafner, Lillicrap, Ba, and
  Norouzi}]{dreamer}
Hafner, D.; Lillicrap, T.; Ba, J.; and Norouzi, M. 2019{\natexlab{a}}.
\newblock Dream to control: Learning behaviors by latent imagination.
\newblock \emph{arXiv preprint}.

\bibitem[{Hafner et~al.(2019{\natexlab{b}})Hafner, Lillicrap, Fischer,
  Villegas, Ha, Lee, and Davidson}]{plannet}
Hafner, D.; Lillicrap, T.; Fischer, I.; Villegas, R.; Ha, D.; Lee, H.; and
  Davidson, J. 2019{\natexlab{b}}.
\newblock Learning latent dynamics for planning from pixels.
\newblock In \emph{International Conference on Machine Learning}.

\bibitem[{Hansen, Su, and Wang(2021)}]{svea}
Hansen, N.; Su, H.; and Wang, X. 2021.
\newblock Stabilizing Deep Q-Learning with ConvNets and Vision Transformers
  under Data Augmentation.
\newblock In \emph{Conference on Neural Information Processing Systems}.

\bibitem[{Hansen and Wang(2020)}]{soda}
Hansen, N.; and Wang, X. 2020.
\newblock Generalization in reinforcement learning by soft data augmentation.
\newblock \emph{arXiv preprint arXiv:2011.13389}.

\bibitem[{Jaderberg et~al.(2016)Jaderberg, Mnih, Czarnecki, Schaul, Leibo,
  Silver, and Kavukcuoglu}]{aux1}
Jaderberg, M.; Mnih, V.; Czarnecki, W.~M.; Schaul, T.; Leibo, J.~Z.; Silver,
  D.; and Kavukcuoglu, K. 2016.
\newblock Reinforcement learning with unsupervised auxiliary tasks.
\newblock \emph{arXiv preprint}.

\bibitem[{Kalashnikov et~al.(2018)Kalashnikov, Irpan, Pastor, Ibarz, Herzog,
  Jang, Quillen, Holly, Kalakrishnan, Vanhoucke
  et~al.}]{kalashnikov2018scalable}
Kalashnikov, D.; Irpan, A.; Pastor, P.; Ibarz, J.; Herzog, A.; Jang, E.;
  Quillen, D.; Holly, E.; Kalakrishnan, M.; Vanhoucke, V.; et~al. 2018.
\newblock Scalable deep reinforcement learning for vision-based robotic
  manipulation.
\newblock In \emph{Conference on Robot Learning}, 651--673. PMLR.

\bibitem[{Lample and Chaplot(2017)}]{lample2017playing}
Lample, G.; and Chaplot, D.~S. 2017.
\newblock Playing FPS games with deep reinforcement learning.
\newblock In \emph{Thirty-First AAAI Conference on Artificial Intelligence}.

\bibitem[{Laskin et~al.(2020)Laskin, Lee, Stooke, Pinto, Abbeel, and
  Srinivas}]{rad}
Laskin, M.; Lee, K.; Stooke, A.; Pinto, L.; Abbeel, P.; and Srinivas, A. 2020.
\newblock Reinforcement learning with augmented data.
\newblock \emph{Advances in Neural Information Processing Systems}.

\bibitem[{Lee et~al.(2020{\natexlab{a}})Lee, Nagabandi, Abbeel, and
  Levine}]{slac}
Lee, A.~X.; Nagabandi, A.; Abbeel, P.; and Levine, S. 2020{\natexlab{a}}.
\newblock Stochastic latent actor-critic: Deep reinforcement learning with a
  latent variable model.
\newblock \emph{Advances in Neural Information Processing Systems}, 33:
  741--752.

\bibitem[{Lee et~al.(2019)Lee, Lee, Shin, and Lee}]{randconv}
Lee, K.; Lee, K.; Shin, J.; and Lee, H. 2019.
\newblock Network Randomization: A Simple Technique for Generalization in Deep
  Reinforcement Learning.
\newblock In \emph{International Conference on Learning Representations}.

\bibitem[{Lee et~al.(2020{\natexlab{b}})Lee, Fischer, Liu, Guo, Lee, Canny, and
  Guadarrama}]{pisac}
Lee, K.-H.; Fischer, I.; Liu, A.; Guo, Y.; Lee, H.; Canny, J.; and Guadarrama,
  S. 2020{\natexlab{b}}.
\newblock Predictive information accelerates learning in rl.
\newblock \emph{Advances in Neural Information Processing Systems}, 33:
  11890--11901.

\bibitem[{Levine et~al.(2016)Levine, Finn, Darrell, and Abbeel}]{levine2016end}
Levine, S.; Finn, C.; Darrell, T.; and Abbeel, P. 2016.
\newblock End-to-end training of deep visuomotor policies.
\newblock \emph{The Journal of Machine Learning Research}, 17(1): 1334--1373.

\bibitem[{Mazoure et~al.(2020)Mazoure, Tachet~des Combes, Doan, Bachman, and
  Hjelm}]{infomax}
Mazoure, B.; Tachet~des Combes, R.; Doan, T.~L.; Bachman, P.; and Hjelm, R.~D.
  2020.
\newblock Deep reinforcement and infomax learning.
\newblock \emph{Advances in Neural Information Processing Systems}, 33:
  3686--3698.

\bibitem[{Mnih et~al.(2015)Mnih, Kavukcuoglu, Silver, Rusu, Veness, Bellemare,
  Graves, Riedmiller, Fidjeland, Ostrovski et~al.}]{human}
Mnih, V.; Kavukcuoglu, K.; Silver, D.; Rusu, A.~A.; Veness, J.; Bellemare,
  M.~G.; Graves, A.; Riedmiller, M.; Fidjeland, A.~K.; Ostrovski, G.; et~al.
  2015.
\newblock Human-level control through deep reinforcement learning.
\newblock \emph{nature}, 518(7540): 529--533.

\bibitem[{Nguyen et~al.(2021)Nguyen, Shu, Pham, Bui, and Ermon}]{tpc}
Nguyen, T.~D.; Shu, R.; Pham, T.; Bui, H.; and Ermon, S. 2021.
\newblock Temporal Predictive Coding For Model-Based Planning In Latent Space.
\newblock In \emph{International Conference on Machine Learning}.

\bibitem[{Okada and Taniguchi(2021)}]{dreaming}
Okada, M.; and Taniguchi, T. 2021.
\newblock Dreaming: Model-based reinforcement learning by latent imagination
  without reconstruction.
\newblock In \emph{International Conference on Robotics and Automation}.

\bibitem[{Oord, Li, and Vinyals(2018)}]{cpc}
Oord, A. v.~d.; Li, Y.; and Vinyals, O. 2018.
\newblock Representation learning with contrastive predictive coding.
\newblock \emph{arXiv preprint}.

\bibitem[{Raileanu et~al.(2020)Raileanu, Goldstein, Yarats, Kostrikov, and
  Fergus}]{drac}
Raileanu, R.; Goldstein, M.; Yarats, D.; Kostrikov, I.; and Fergus, R. 2020.
\newblock Automatic data augmentation for generalization in deep reinforcement
  learning.
\newblock \emph{arXiv preprint}.

\bibitem[{Schwarzer et~al.(2020)Schwarzer, Anand, Goel, Hjelm, Courville, and
  Bachman}]{spr}
Schwarzer, M.; Anand, A.; Goel, R.; Hjelm, R.~D.; Courville, A.; and Bachman,
  P. 2020.
\newblock Data-Efficient Reinforcement Learning with Self-Predictive
  Representations.
\newblock In \emph{International Conference on Learning Representations}.

\bibitem[{Srinivas, Laskin, and Abbeel(2020)}]{curl}
Srinivas, A.; Laskin, M.; and Abbeel, P. 2020.
\newblock Curl: Contrastive unsupervised representations for reinforcement
  learning.
\newblock \emph{arXiv preprint arXiv:2004.04136}.

\bibitem[{Stone et~al.(2021)Stone, Ramirez, Konolige, and Jonschkowski}]{dcs}
Stone, A.; Ramirez, O.; Konolige, K.; and Jonschkowski, R. 2021.
\newblock The Distracting Control Suite--A Challenging Benchmark for
  Reinforcement Learning from Pixels.
\newblock \emph{arXiv preprint}.

\bibitem[{Stooke et~al.(2021)Stooke, Lee, Abbeel, and Laskin}]{atc}
Stooke, A.; Lee, K.; Abbeel, P.; and Laskin, M. 2021.
\newblock Decoupling representation learning from reinforcement learning.
\newblock In \emph{International Conference on Machine Learning}, 9870--9879.
  PMLR.

\bibitem[{Tassa et~al.(2018)Tassa, Doron, Muldal, Erez, Li, Casas, Budden,
  Abdolmaleki, Merel, Lefrancq et~al.}]{dmc}
Tassa, Y.; Doron, Y.; Muldal, A.; Erez, T.; Li, Y.; Casas, D. d.~L.; Budden,
  D.; Abdolmaleki, A.; Merel, J.; Lefrancq, A.; et~al. 2018.
\newblock Deepmind control suite.
\newblock \emph{arXiv preprint arXiv:1801.00690}.

\bibitem[{Villani(2021)}]{villani2021topics}
Villani, C. 2021.
\newblock \emph{Topics in optimal transportation}, volume~58.
\newblock American Mathematical Soc.

\bibitem[{Vincent et~al.(2008)Vincent, Larochelle, Bengio, and
  Manzagol}]{vincent2008extracting}
Vincent, P.; Larochelle, H.; Bengio, Y.; and Manzagol, P.-A. 2008.
\newblock Extracting and composing robust features with denoising autoencoders.
\newblock In \emph{Proceedings of the 25th international conference on Machine
  learning}, 1096--1103.

\bibitem[{Xie et~al.(2020)Xie, Dai, Hovy, Luong, and Le}]{xie2020unsupervised}
Xie, Q.; Dai, Z.; Hovy, E.; Luong, T.; and Le, Q. 2020.
\newblock Unsupervised data augmentation for consistency training.
\newblock \emph{Advances in Neural Information Processing Systems}, 33:
  6256--6268.

\bibitem[{Yarats et~al.(2021{\natexlab{a}})Yarats, Fergus, Lazaric, and
  Pinto}]{drqv2}
Yarats, D.; Fergus, R.; Lazaric, A.; and Pinto, L. 2021{\natexlab{a}}.
\newblock Mastering Visual Continuous Control: Improved Data-Augmented
  Reinforcement Learning.
\newblock In \emph{International Conference on Learning Representations}.

\bibitem[{Yarats et~al.(2021{\natexlab{b}})Yarats, Fergus, Lazaric, and
  Pinto}]{proto}
Yarats, D.; Fergus, R.; Lazaric, A.; and Pinto, L. 2021{\natexlab{b}}.
\newblock Reinforcement learning with prototypical representations.
\newblock In \emph{International Conference on Machine Learning}, 11920--11931.
  PMLR.

\bibitem[{Yarats, Kostrikov, and Fergus(2020)}]{drq}
Yarats, D.; Kostrikov, I.; and Fergus, R. 2020.
\newblock Image Augmentation Is All You Need: Regularizing Deep Reinforcement
  Learning from Pixels.
\newblock In \emph{International Conference on Learning Representations}.

\bibitem[{Yarats et~al.(2021{\natexlab{c}})Yarats, Zhang, Kostrikov, Amos,
  Pineau, and Fergus}]{sacae}
Yarats, D.; Zhang, A.; Kostrikov, I.; Amos, B.; Pineau, J.; and Fergus, R.
  2021{\natexlab{c}}.
\newblock Improving Sample Efficiency in Model-Free Reinforcement Learning from
  Images.
\newblock In \emph{Proceedings of the AAAI Conference on Artificial
  Intelligence}.

\bibitem[{Yu et~al.(2021)Yu, Lan, Zeng, Feng, and Chen}]{virtual}
Yu, T.; Lan, C.; Zeng, W.; Feng, M.; and Chen, Z. 2021.
\newblock PlayVirtual: Augmenting Cycle-Consistent Virtual Trajectories for
  Reinforcement Learning.
\newblock \emph{Advances in Neural Information Processing Systems}.

\bibitem[{Zhang et~al.(2021)Zhang, McAllister, Calandra, Gal, and Levine}]{dbc}
Zhang, A.; McAllister, R.; Calandra, R.; Gal, Y.; and Levine, S. 2021.
\newblock Learning invariant representations for reinforcement learning without
  reconstruction.
\newblock \emph{International Conference on Learning Representations}.

\bibitem[{Zhu et~al.(2020)Zhu, Xia, Wu, Deng, Zhou, Qin, and Li}]{mcurl}
Zhu, J.; Xia, Y.; Wu, L.; Deng, J.; Zhou, W.; Qin, T.; and Li, H. 2020.
\newblock Masked contrastive representation learning for reinforcement
  learning.
\newblock \emph{arXiv preprint}.

\end{thebibliography}

\newpage
\appendix

\section{Propositions}
\begin{proposition}
Let $V^*$ be the optimal value function for a given discount factor $\gamma$. If $c\geq \gamma$, given the bisimulation metric $d$ and a prototype's underlying state  $\mathbf{s}_c$, then for any two states $\mathbf{s}_1,\mathbf{s}_2$ such that  $d(\mathbf{s}_1,\mathbf{s}_c)<\epsilon, d(\mathbf{s}_2,\mathbf{s}_c)<\epsilon$, we have
$$|V^{*}(\mathbf{s}_1)-V^{*}(\mathbf{s}_2)|<\frac{2\epsilon}{1-c}.$$
\end{proposition}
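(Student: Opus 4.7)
The plan is to reduce the claim to Theorem 5.1 of Ferns et al. (2004), which bounds the difference in optimal values of two states by their bisimulation distance, and then use the triangle inequality for the bisimulation metric $d$ to control $d(\mathbf{s}_1,\mathbf{s}_2)$ via the two given inequalities $d(\mathbf{s}_1,\mathbf{s}_c)<\epsilon$ and $d(\mathbf{s}_2,\mathbf{s}_c)<\epsilon$.

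First I would recall the standard result that, provided $c\geq\gamma$, the bisimulation metric $d$ defined in Section \ref{bisim} satisfies
\[
|V^{*}(\mathbf{s}_1)-V^{*}(\mathbf{s}_2)| \;\leq\; \frac{d(\mathbf{s}_1,\mathbf{s}_2)}{1-c}
\]
for all $\mathbf{s}_1,\mathbf{s}_2\in\mathcal{S}$. This is exactly Theorem 5.1 of Ferns, Panangaden and Precup (2004), and it is the only non-trivial ingredient in the argument; our proposition is really a corollary that adapts it to the clustering setting by introducing the intermediate prototype state $\mathbf{s}_c$.

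Next I would invoke the triangle inequality for $d$. Since $d$ is a pseudometric on $\mathcal{S}$ (this is established in the same line of work on bisimulation metrics, using the fact that $W_1$ is a metric on probability measures and that the $\max_{a}$ of pseudometrics is again a pseudometric), we have
\[
d(\mathbf{s}_1,\mathbf{s}_2) \;\leq\; d(\mathbf{s}_1,\mathbf{s}_c)+d(\mathbf{s}_c,\mathbf{s}_2) \;<\; \epsilon+\epsilon \;=\; 2\epsilon.
\]
Chaining this with the Ferns et al.\ bound yields
\[
|V^{*}(\mathbf{s}_1)-V^{*}(\mathbf{s}_2)| \;\leq\; \frac{d(\mathbf{s}_1,\mathbf{s}_2)}{1-c} \;<\; \frac{2\epsilon}{1-c},
\]
which is the desired conclusion.

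The only delicate step is justifying the triangle inequality for $d$; everything else is bookkeeping. I would therefore either cite the pseudometric property of $d$ directly from Ferns et al.\ (2004, 2011) or, if a self-contained argument is preferred, give a short induction-on-iterates of the operator whose fixed point defines $d$, using that (i) absolute differences of rewards satisfy the triangle inequality and (ii) $W_1$ satisfies the triangle inequality with respect to a pseudometric cost, so the operator preserves the triangle inequality at each step and hence the fixed point inherits it. No other obstacles are expected.
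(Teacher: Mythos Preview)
Your proposal is correct, and the overall two-step structure (the Ferns et al.\ value bound plus a triangle inequality) matches the paper. The one genuine difference is the order in which the two ingredients are applied. You use the triangle inequality for the pseudometric $d$ to get $d(\mathbf{s}_1,\mathbf{s}_2)<2\epsilon$ and then invoke the Ferns bound once; the paper instead applies the Ferns bound twice, to the pairs $(\mathbf{s}_1,\mathbf{s}_c)$ and $(\mathbf{s}_2,\mathbf{s}_c)$, obtaining $|V^{*}(\mathbf{s}_i)-V^{*}(\mathbf{s}_c)|<\epsilon/(1-c)$, and then uses the ordinary triangle inequality for real numbers on $V^{*}$. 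The paper's ordering has the minor advantage of never needing the pseudometric property of $d$, so the ``delicate step'' you flag simply does not arise there. On the other hand, the paper re-derives the inequality $(1-c)|V^{*}(\mathbf{s})-V^{*}(\mathbf{s}')|\leq d(\mathbf{s},\mathbf{s}')$ from scratch by induction on value-iteration iterates and the Kantorovich dual, whereas you black-box it as Theorem 5.1. Either route is perfectly valid and yields the identical bound.
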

\begin{proof}
The proof uses techniques from the proof of Theorem 5.1 in \citet{ferns2004metrics}, adapting them to the clustering setting considered in this paper. 

For any $\mathbf{s},\mathbf{s}^{\prime}\in \mathcal{S}$, we suppose that $V_{0}(\mathbf{s})=0$ and $d(\mathbf{s},\mathbf{s}^{\prime})=0$, We define $V_{i+1}(\mathbf{s})$ and $d_{i+1}\left(\mathbf{s}, \mathbf{s}^{\prime}\right)$ by 
$$
V_{i+1}(\mathbf{s})=\max _{\mathbf{a} \in \mathcal{A}}(r_{\mathbf{s}}^{\mathbf{a}}+\gamma \sum_{u \in \mathcal{S}} P_{\mathbf{s} \mathbf{u}}^{\mathbf{a}} V_{i}(\mathbf{u})),
$$
$$d_{i+1}\left(\mathbf{s}, \mathbf{s}^{\prime}\right)=\max_{\mathbf{a}\in\mathcal{A}}(1-c)\left|r_{\mathbf{s}}^{\mathbf{a}}-r_{\mathbf{s}^{\prime}}^{\mathbf{a}}\right|+c W_1\left(\mathcal{P}_{\mathbf{s}}^{\mathbf{a}}, \mathcal{P}_{\mathbf{s}^{\prime}}^{\mathbf{a}};d_i\right).$$
We first prove $(1-c)\left|V_{i}({\mathbf{s}})-V_{i}\left({\mathbf{s}}^{\prime}\right)\right|\leq d_{i}\left({\mathbf{s}}, {\mathbf{s}}^{\prime}\right)$ by induction. With the induction hypothesis and the assumptions $c\geq \gamma, r_{\mathbf{s}}^{\mathbf{a}}\in [0,1]$, we have
$$\begin{aligned}
	&\frac{(1-c)\gamma}{c}V_{i}({\mathbf{s}})-\frac{(1-c)\gamma}{c}V_{i}({\mathbf{s}^{\prime}})\\\leq&(1-c)\left|V_{i}({\mathbf{s}})-V_{i}\left({\mathbf{s}}^{\prime}\right)\right|\leq d_{i}\left({\mathbf{s}}, {\mathbf{s}}^{\prime}\right).
	\end{aligned}$$
Then for any $\mathbf{s},\mathbf{s}^{\prime}\in\mathcal{S}$, we have
\[\begin{aligned}
    &(1-c)\left|V_{i+1}({\mathbf{s}})-V_{i+1}\left({\mathbf{s}}^{\prime}\right)\right| 
    \\=&(1-c)|\max_{\mathbf{a} \in \mathcal{A}}(r_{{\mathbf{s}}}^{\mathbf{a}}+\gamma \sum_{{\mathbf{u}} \in \mathcal{S}} P_{{\mathbf{s}} {\mathbf{u}}}^{\mathbf{a}} V_{i}({\mathbf{u}}))-
    \\&\qquad\quad\max_{\mathbf{a} \in \mathcal{A}}(r_{{\mathbf{s}}^{\prime}}^{\mathbf{a}}+\gamma \sum_{{\mathbf{u}} \in \mathcal{S}} P_{{\mathbf{s}}^{\prime} {\mathbf{u}}}^{\mathbf{a}} V_{i}({\mathbf{u}}))| 
       \\ \leq& (1-c) \max_{\mathbf{a} \in \mathcal{A}}|r_{{\mathbf{s}}}^{\mathbf{a}}-r_{{\mathbf{s}}^{\prime}}^{\mathbf{a}}
       +\gamma \sum_{{\mathbf{u}} \in \mathcal{S}}(P_{{\mathbf{s}} {\mathbf{u}}}^{\mathbf{a}}-P_{{\mathbf{s}}^{\prime} {\mathbf{u}}}^{\mathbf{a}}) V_{i}({\mathbf{u}})| 
       \\ \leq &\max_{\mathbf{a} \in \mathcal{A}}((1-c)|r_{{\mathbf{s}}}^{\mathbf{a}}-r_{{\mathbf{s}}^{\prime}}^{\mathbf{a}}|+
       \\ &\qquad \quad c|\sum_{{\mathbf{u}} \in \mathcal{S}}\left(P_{{\mathbf{s}} {\mathbf{u}}}^{\mathbf{a}}-P_{{\mathbf{s}}^{\prime} {\mathbf{u}}}^{\mathbf{a}}\right) V_{i}({\mathbf{u}}) \cdot \frac{(1-c) \gamma}{c}|) 
       \\ \leq &\max_{\mathbf{a} \in \mathcal{A}}((1-c) |r_{{\mathbf{s}}}^{\mathbf{a}}-r_{{\mathbf{s}}^{\prime}}^{\mathbf{a}}|+
       c W_{1}\left(P_{{\mathbf{s}}}^{\mathbf{a}}, P_{{\mathbf{s}}^{\prime}}^{\mathbf{a}};d_{i}\right)) 
       \\=&d_{i+1}\left({\mathbf{s}}, {\mathbf{s}}^{\prime}\right). \end{aligned}\]
  The last inequality comes from the dual representation of $W_1$ \cite{villani2021topics}. From the convergence of value iteration \cite{bellman1957markovian} and the formulation of bisimulation metrics \cite{ferns2004metrics}, we take limits on both sides of  $(1-c)\left|V_{i}({\mathbf{s}})-V_{i}\left({\mathbf{s}}^{\prime}\right)\right|\leq d_{i}\left({\mathbf{s}}, {\mathbf{s}}^{\prime}\right)$ and it comes to
  $$(1-c)|V^{*}(\mathbf{s})-V^{*}(\mathbf{s}^{\prime})|\leq d(\mathbf{s},\mathbf{s}^{\prime}).$$
  Next, from the assumptions $d(\mathbf{s}_1,\mathbf{s}_c)<\epsilon, d(\mathbf{s}_2,\mathbf{s}_c)<\epsilon$, we can see that
$$\begin{aligned}
(1-c)|V^{*}(\mathbf{s}_1)-V^{*}(\mathbf{s}_c)|\leq d(\mathbf{s}_1,\mathbf{s}_c)<\epsilon,\\
(1-c)|V^{*}(\mathbf{s}_2)-V^{*}(\mathbf{s}_c)|\leq d(\mathbf{s}_2,\mathbf{s}_c)<\epsilon.
\end{aligned}$$Finally, using the triangle inequality we have
\begin{align*}
&|V^{*}(\mathbf{s}_1)-V^{*}(\mathbf{s}_2)|
\\=&|V^{*}(\mathbf{s}_1)-V^{*}(\mathbf{s}_c)+V^{*}(\mathbf{s}_c)-V^{*}(\mathbf{s}_2)|
\\\leq& |V^{*}(\mathbf{s}_1)-V^{*}(\mathbf{s}_c)| + |V^{*}(\mathbf{s}_2)-V^{*}(\mathbf{s}_c)|
\\<&\frac{2\epsilon}{1-c},
\end{align*}
which completes the proof.
\end{proof}
\section{Experiments}
\subsection{Measuring Clustering Quality by CH index}
The Calinski-Harabasz index \cite{chindex} is the ratio of between-clusters dispersion and inter-cluster dispersion for all clusters,  where dispersion is defined as the sum of distances squared. The CH index is higher when clusters are dense and well separated, which relates to a standard concept of a cluster. In Section 4.1, we assign each observation encoding to the cluster with the nearest prototype. Given the corresponding physical states $[\mathbf{x}_1,\mathbf{x}_2,\ldots,\mathbf{x}_N]$, CH index for $K$ number of clusters is computed by
$$\frac{\sum_{k=1}^{K} n_{k}\left\|\mathbf{C}_{k}-\mathbf{C}\right\|^{2} }{\sum_{k=1}^{K} \sum_{i=1}^{n_{k}} \left\|\mathbf{x}_{i}-\mathbf{C}_{k}\right\|^{2}}\cdot\frac{N-K}{K-1},
$$
where $n_k$ is the number of observations in cluster $k$, $\mathbf{C}_{k}$ is the centroid of cluster $k$, and $\mathbf{C}$ is the centroid of all physical states data. 

\subsection{Distracting Control Suite Setting}
We evaluate all agents in Distracting Control Suite (DCS) \cite{dcs}, which extends DeepMind Control (DMC) \cite{dmc} with three kinds of visual distractions (variations in background, color and camera pose).  Figure \ref{dcs} shows the snapshots of all six environments. In these tasks, robots face multiple distractions at the same time. We use the static setting where each distraction is randomly sampled at the beginning of each episode and then fixed throughout the whole episode. The static setting is more challenging than the dynamic setting since the agents see less variety of distractions during training \cite{dcs}. We evaluate an agent by computing an average return over ten episodes after every 10K environment steps.
In the multiple distractions setting, we use DCS easy setting. Specifically, the background image is sampled from 4 videos, and the difficulty scales for camera and color distractions are set to $0.1$. 
The action repeat of each task is adopted from PlaNet \cite{plannet} as shown in Table \ref{action_repeat}, which is the common setting in visual RL.

\begin{figure}[t]
\vskip 0.2in
\begin{center}
\centerline{\includegraphics[width=1\columnwidth]{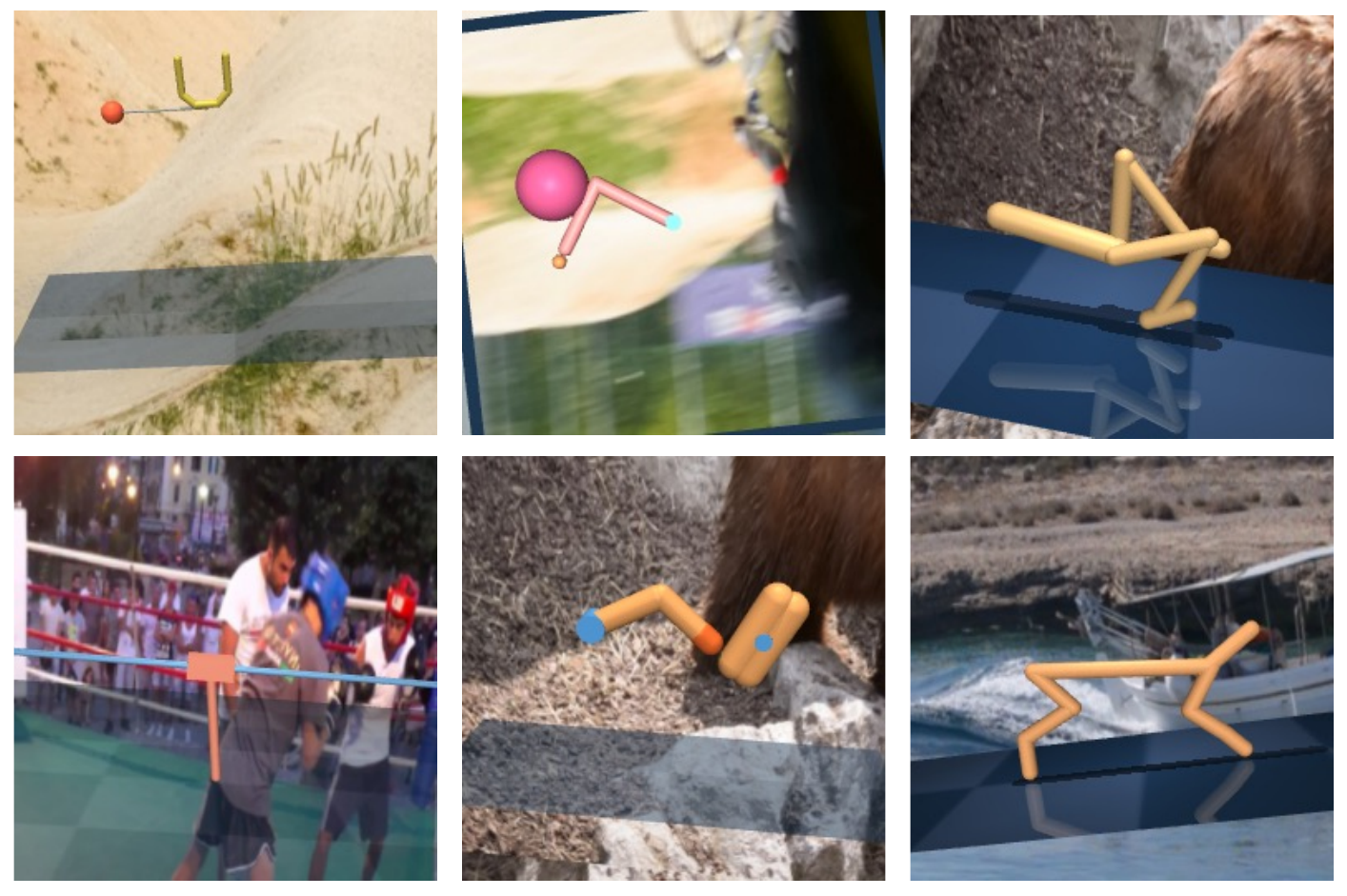}}
\caption{Six tasks in DCS. Agents face the variations in the camera pose, color and background simultaneously.}
\label{dcs}
\end{center}
\vskip -0.2in
\end{figure}
\begin{figure*}[t]
\centering
\centerline{\includegraphics[width=0.7\textwidth]{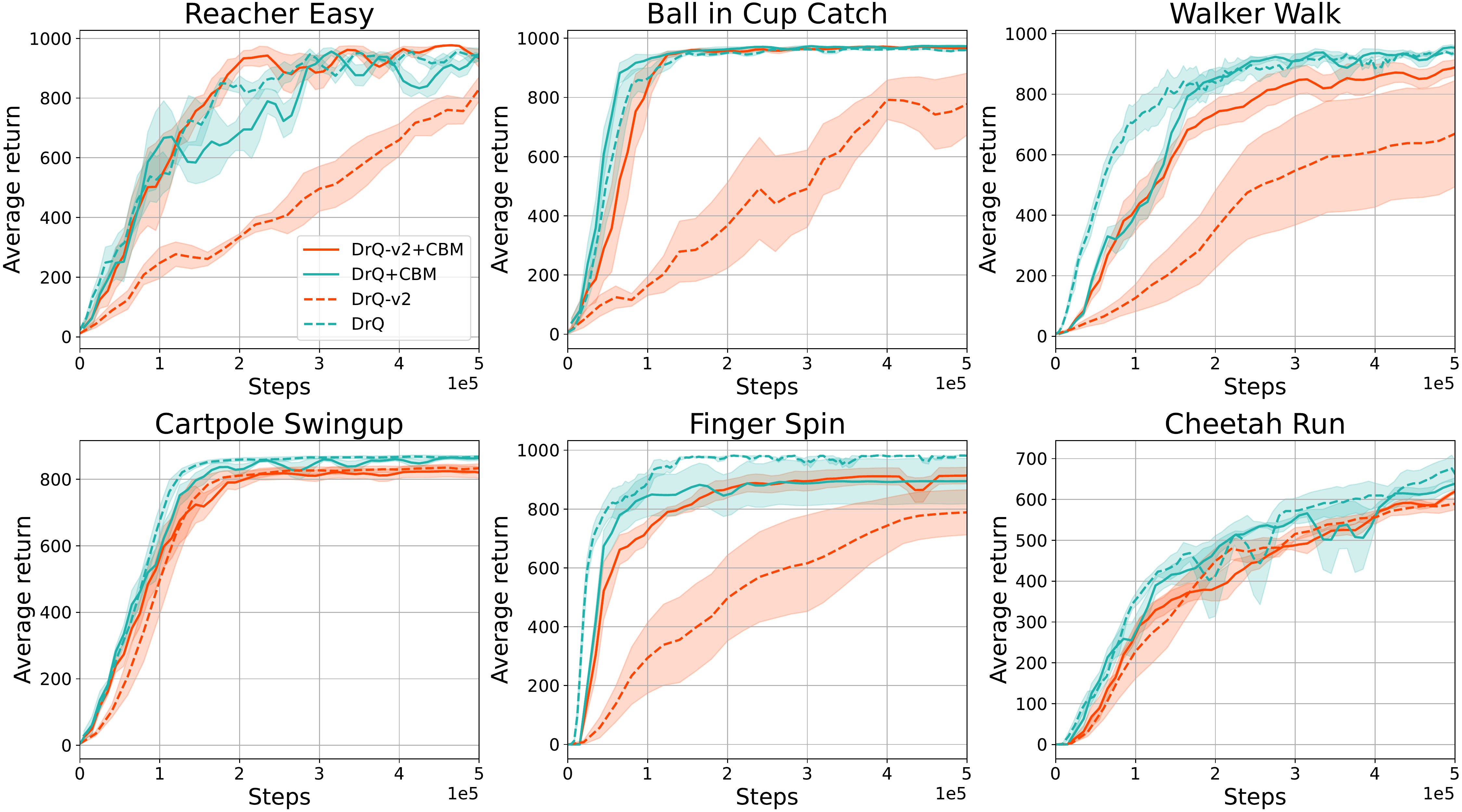}}
\caption{Results in DeepMind Control. CBM does not decrease the sample efficiency and asymptotic performance and even bring a further improvement on the default DMC setting without distractions.}
\label{fig:stand}
\vskip -0.05cm
\end{figure*}
\begin{table}[ht]
    \centering
    \begin{tabular}{cc}
    \toprule
        \textbf{Environment} & \textbf{Action repeat}\\ \midrule
        Ball in Cup-Catch & 4\\ 
        Cartpole Swingup & 8\\ 
        Cheetah Run & 4\\ 
        Finger Spin & 2\\ 
        Reacher Easy & 4\\ 
        Walker Walk & 2\\ \bottomrule
    \end{tabular}
    \caption{The action repeat hyperparameter for each task.}
\label{action_repeat}

\end{table}

\subsection{Results in DeepMind Control}
We note that the previous method DBC \cite{dbc} which utilizes the bisimulation metrics has a side effect on the no-distraction setting in terms of sample efficiency and asymptotic performance. We also evaluate our method in the standard DeepMind Control environments without distractions. We show the results in Figure \ref{fig:stand}. Combined with DrQ, CBM achieves a comparable performance. We guess that DrQ has largely saturated the performance gain in this setting. While combined with DrQ-v2, CBM helps improve the sample efficiency on several tasks.

\subsection{Generalization Experiments}
In the challenging multiple distractions setting, other baselines hardly learn reasonable behaviors, while CBM performs fairly well in almost all the environments. To further evaluate the generalization performance of CBM, we replace the background with 30 unseen videos and evaluate every trained agent. We present the evaluation results in Table \ref{generalization}. The final return is averaged over 50 episodes and the results are reported over 5 seeds. We observe that the performance of CBM only slightly decreases under backgrounds from unseen videos. This result demonstrates that CBM learns robust policies that can generalize under unseen distractions. 
\begin{table*}[t]
	\centering
	\begin{tabular}{ l l l l l l l }
		\toprule
		Method & BiC-Catch & C-Swingup & C-Run & F-Spin & R-Easy & W-walk \\ \midrule
		DrQ + CBM & $705\pm164$ & $529\pm102$ & $243\pm70$ & $856\pm111$ & $336\pm97$ & $30\pm9$ \\
		DrQ-v2 + CBM & ${673\pm172}$ & ${485\pm99}$ & $373\pm82$ & $823\pm103$ & ${410\pm165}$ & ${629\pm119}$\\ 
		\bottomrule
	\end{tabular}
	\caption{Generalization performance of CBM. The polices learned by CBM are able to generalize to unseen environments.}
	\label{generalization}
\end{table*}
\section{Implementation Details}

\subsection{CBM Implementation}
\begin{algorithm}[t]
      \caption{CBM}
      \label{cbm_alg}
    \begin{algorithmic}
        \STATE Initialize the replay buffer $\mathcal{B}\leftarrow\emptyset$
        \STATE Initialize the parameters $\mathbf{\theta}$ of networks
        \STATE Initialize the prototypes and their corresponding rewards
      \FOR{each episode}
      \FOR{each environment step}
      \STATE  Obtain the observation $\mathbf{o}_{t}$
      \STATE Execute actions: $ \mathbf{a}_{t} \sim \pi_\mathbf{\theta}(\cdot| \mathbf{o}_{t+1})$ \
      \STATE Record data: $\mathcal{B}\leftarrow \mathcal{B}\cup\{ \mathbf{o}_t, \mathbf{a}_t, {r}_{t+1}, \mathbf{o}_{t+1} \}$
      \ENDFOR
      \FOR{each training step}
      \STATE Sample a batch of training data
      \STATE Compute RL losses $\mathcal{L}_{\text{RL}}$\
      \STATE Update prototypes' rewards by moving average\
      \STATE Update prototypes' transitions from model $\mathcal{P}$\
      \STATE Compute the bisimulation distance matrix $\mathbf{D}$\
      \STATE Compute the clustering assignment loss $\mathcal{L}_{\text{CBM}}$ \
      \STATE Compute the dynamics loss $\mathcal{L}_{\mathcal{P}}$\
      \STATE $\mathbf{\theta} \leftarrow \mathbf{\theta} - \lambda\nabla_\mathbf{\theta} (\mathcal{L}_{\text{RL}}+\mathcal{L}_{\mathcal{P}}+\mathcal{L}_{\text{CBM}})$
      \ENDFOR
      \ENDFOR
    \end{algorithmic}
    \end{algorithm}

\noindent \textbf{Dynamics Model Optimization}
To approximate the bisimulation distance, we need to train a one-step latent dynamics model. Prior work  \cite{spr,deepmdp} has shown that learning latent dynamics models by minimizing an un-normalized L2 loss over predictions of future latents is prone to collapse. In this paper, we use cosine similarity (equal to a normalized L2 loss) and optimize a contrastive loss \cite{cpc}. Given a batch of data, we encode the observations into latent space, then compute the dynamics loss by
$$
\mathcal{L}_{\mathcal{P}}(\mathbf{\theta})=\sum_{i=1}^B \log \frac{\exp(f(\mathbf{\hat{z}}_i,\mathbf{z}_i^{\prime})/\tau)}{\sum_{k=1}^B \exp
	(f(\mathbf{\hat{z}}_i,\mathbf{z}_k^{\prime})/\tau)}
$$where $\mathbf{\hat{z}}_i=\mathcal{P}(\mathbf{z}_i, \mathbf{a}_i)$ is the prediction of future latent, $\mathbf{z}_i^{\prime}$ is the ground truth of the future latent, and the similarity function $f$ has the form:
$$f(\mathbf{\hat{z}}_i,\mathbf{z}_i^{\prime})=\left(\frac{\mathbf{\hat{z}}_i}{\|\mathbf{\hat{z}}_i\|_2}\right)^{\top}\left(\frac{\mathbf{{z}}_i^{\prime}}{\|\mathbf{{z}}_i^{\prime}\|_2}\right)$$In CBM, a prototype $\mathbf{c}$ can be seen as a point in the latent space. To obtain its one-step transition , we select action $\mathbf{a_c}$ from the current policy and obtain the prototype transition from the learned latent model $\mathbf{c}^{\prime} = \mathcal{P}(\mathbf{c},\mathbf{a_c})$.  

\noindent \textbf{Network Architecture} 
We adopt our network architecture from \citet{sacae}. We implement he shared encoder as a four-layer convolutional network with $3\times3$ kernels and 32 channels. We apply the ReLU activation after each conv layer. The first convolutional layer uses a stride of 2, while the remaining convolutional layers use a stride of 1. 
The output of the convnet is fed into a single fully-connected layer normalized by LayerNorm \cite{ba2016layer}. Finally, we apply tanh nonlinearity to the 50-dimensional output of the fully-connected layer.
The actor and critic modules contain three fully connected layers with hidden dimension 1024. The transition model consists of three fully connected layers with hidden dimension 256.

We run all experiments in one GPU, Geforce 2080Ti. To stabilize the optimization, we block the gradients of the agent's RL loss $\mathcal{L}_{\text{RL}}$ from updating the encoder, prototypes, and the latent model. We illustrate the training process of CBM in Algorithm \ref{cbm_alg}. Moreover, we list the hyperparameters of CBM in Table \ref{hyper}.
\begin{table*}[t]
	\centering
	\begin{tabular}{l c}
		\toprule
		\textbf{Hyperparameter} & \textbf{Setting}\\ 
		\midrule 
		Input dimension & 3$\times$84$\times$84\\ 
		Stacked frames & 3\\ 
		Discount factor & 0.99\\ 
		Episode length & 1000\\ 
		Replay buffer size & 500K\\ 
		Batch size & 128\\ 
		Optimizer & Adam\\ 
		learning rate &5e-4\\ 
		Random cropping padding & 4\\ 
		Seed steps & 4000 \\ 
		SAC entropy temperature & 0.1 \\ 
		Encoder conv layers & 4\\ 
		Encoder conv strides & [2,1,1,1]\\ 
		Encoder conv channels & 32\\ 
		Encoder feature dim & 50\\ 
		Actor head MLP layers & 3\\ 
		Actor head MLP hidden dim & 1024\\ 
		Actor update frequency & 2\\ 
		Critic head MLP layers & 3\\ 
		Critic head MLP hidden dim & 1024\\ 
		Critic target update frequency & 2\\ 
		Critic soft-update rate & 0.01\\ 
		Number of prototypes ($K$)& 128\\
		Prototype reward soft-update rate ($\beta$)& 0.01\\
		Softmax temperature ($\tau$)& 0.1\\
		Sinkhorn regularization parameter ($\varepsilon$) &0.05\\
		DrQ-v2 noise schedule & Cartpole, Finger, Cup, Walker: linear(1.0, 0.1, 100000)\\
		$\quad$ &Cheetah, Reacher: linear(1.0,0.1,500000)\\
		\bottomrule
	\end{tabular}
	\caption{CBM list of hyperparameters. The noise schedule "linear(1.0, 0.1, 500000)" used for DrQv2 means that the
exploration noise decays linearly from 1.0 to 0.1 after 500K environment steps.}\label{hyper}
\end{table*}

\subsection{Baseline Implementation} 
\begin{table*}[ht]
	\centering
	\begin{tabular}{ l l l l l l l }
		\toprule
		Method & BiC-Catch & C-Swingup & C-Run & F-Spin & R-Easy & W-walk \\ \midrule
		DrQ (DCS) & $138\pm20$ & $334\pm29$ & $4\pm2$ & $378\pm125$ & $113\pm22$ & $28\pm1$ \\
		DrQ (Our) & ${99\pm36}$ & ${360\pm36}$ & $203\pm34$ & $545\pm140$ & ${144\pm68}$ & $44\pm14$ \\ 
		\bottomrule
	\end{tabular}
	\caption{Comparison between different implementations in multiple distractions settings. Our implementation achieves similar or better performance than that used in DCS \cite{dcs}.}
	\label{drq_imp}
\end{table*}
We reimplement DrQ \cite{drq} and DrQ-v2 \cite{drqv2} using Pytorch. We made several modifications in the implementations of DrQ to unify the hyperparameters used in DrQ and DrQ-v2. (1)  We use a small learning rate 5e-4 instead of 1e-3. (2) We use a large replay buffer whose size is 500K instead of 100K. To improve the computational efficiency, we use a small batch size 128. Furthermore, we do not use a target encoder. Note that these modifications do not reduce the performance of DrQ and even improve it in some environments. In Table \ref{drq_imp}, we provide a comparison between our implementation and that used in DCS.

For the other three baselines, Proto-RL \cite{proto}, SVEA \cite{svea}, TPC \cite{tpc}, we use their original implementations and default hyperparameters except for adjusting them to the same action repeat setting as in CBM. Additionally, we note that SVEA adopts a deeper convolution network (11 layers). We adjust it to the same four-layer convolution network as in CBM for a fair comparison. 

\section{Additional Visualization Results}\label{appendix:visualization}
We visualize the clustering results of CBM for all the six environments in the multiple distractions setting. Specifically, we sample visual observations (3 consecutive frames) from the replay buffer and obtain their representations using the encoder. In each environment, we visualize the first four prototypes by presenting the three nearest observations to each of them in the latent space. From Figure \ref{fig:walker-clu} to Figure \ref{fig:cup-clu}, we place the three nearest neighbors of a prototype in the same row. We observe that the visual  observations in the same group exhibit similar behaviors. This result demonstrates that CBM maps visual observations with similar behaviors into neighboring points in the latent space and thus achieves task-specific clustering.

\begin{figure*}
\centering
\centerline{\includegraphics[width=1\textwidth]{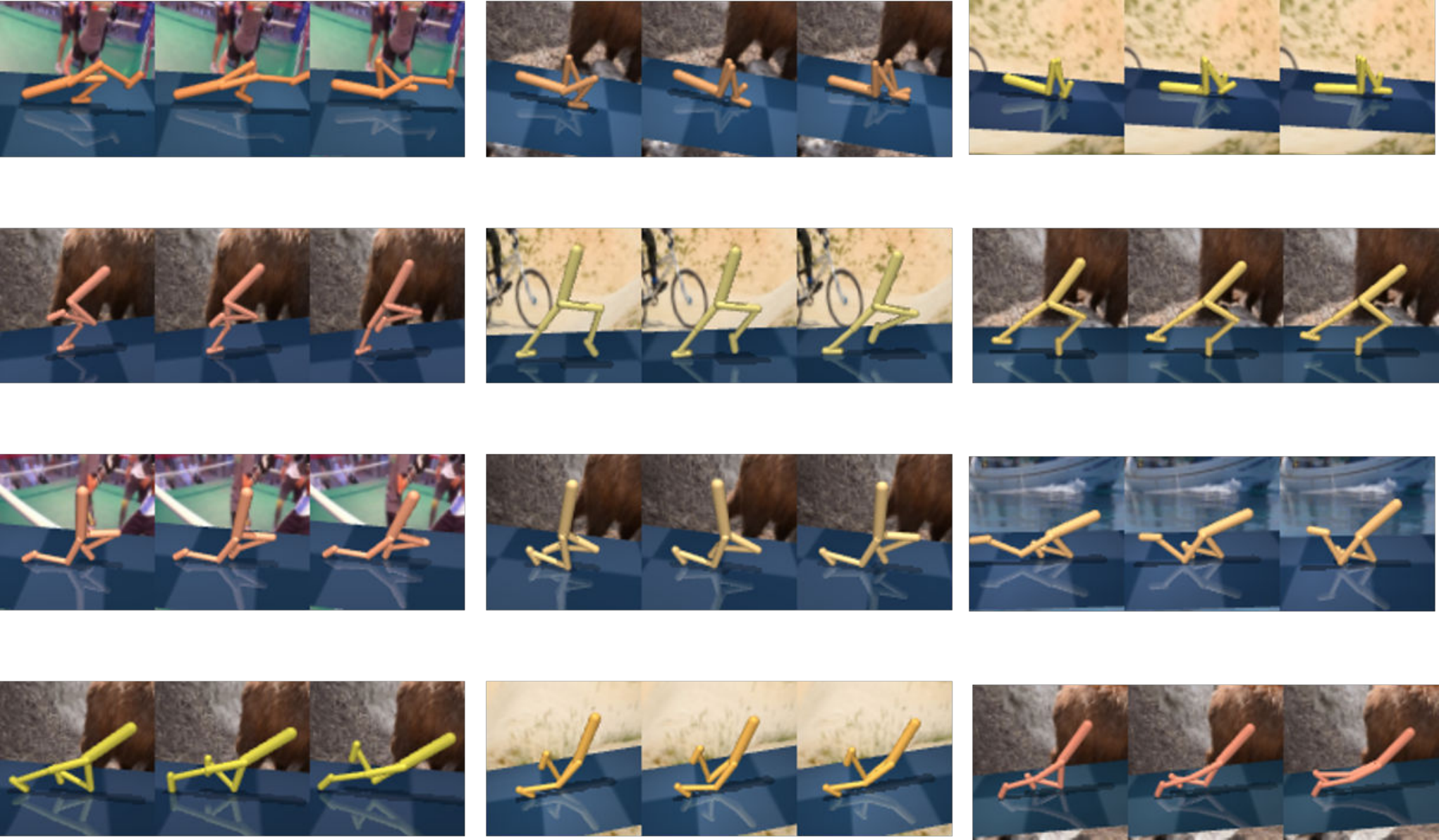}}
\caption{Clustering results visualization in the walker walk task}
\label{fig:walker-clu}
\vskip -0.4cm
\end{figure*}

\begin{figure*}
\centering
\centerline{\includegraphics[width=1\textwidth]{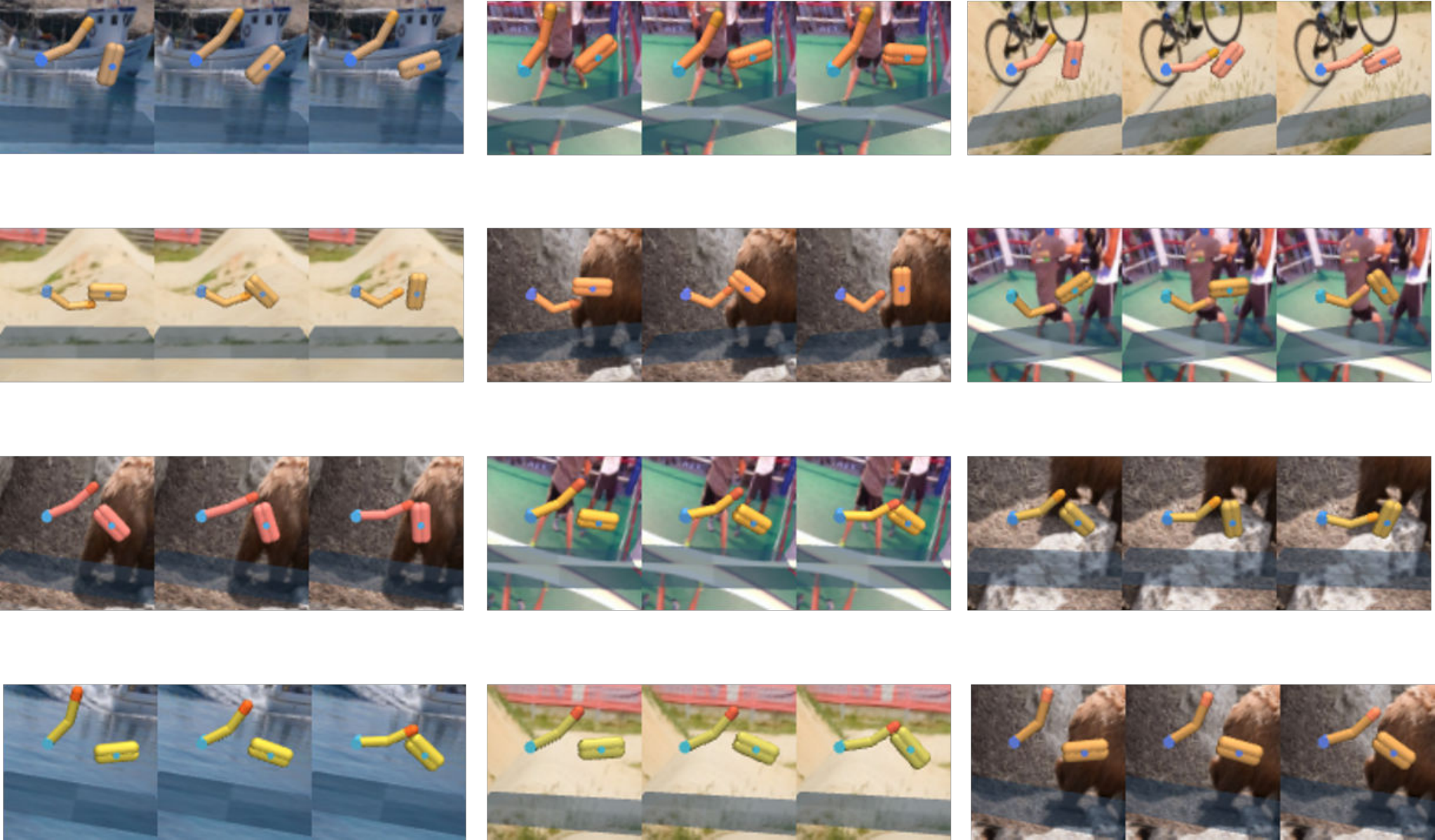}}
\caption{Clustering results visualization in the finger spin task}
\label{fig:finger-clu}
\vskip -0.4cm
\end{figure*}
\begin{figure*}
\centering
\centerline{\includegraphics[width=1\textwidth]{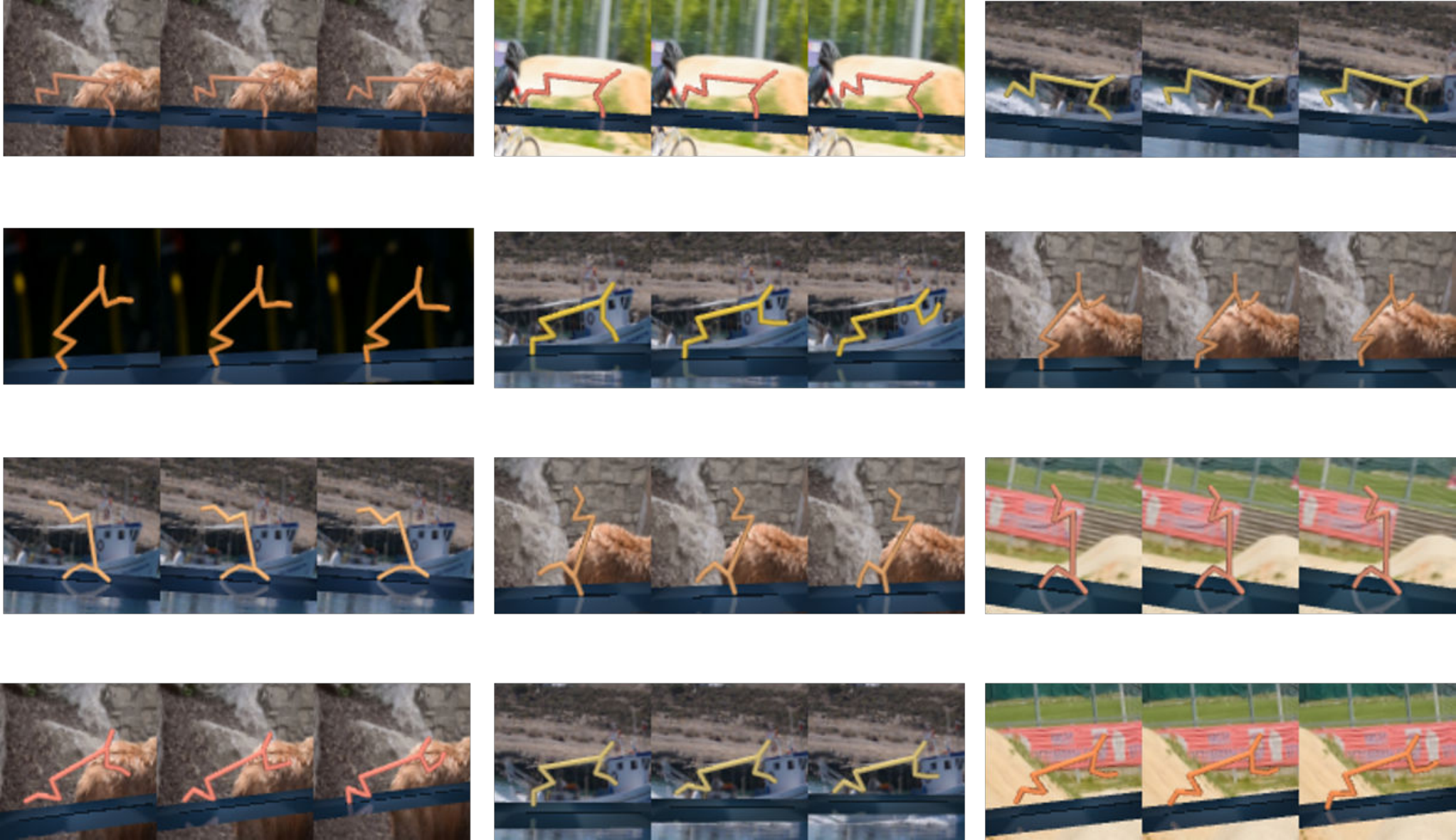}}
\caption{Clustering results visualization in the cheetah run task}
\label{fig:cheetah-clu}
\vskip -0.4cm
\end{figure*}
\begin{figure*}
\centering
\centerline{\includegraphics[width=1\textwidth]{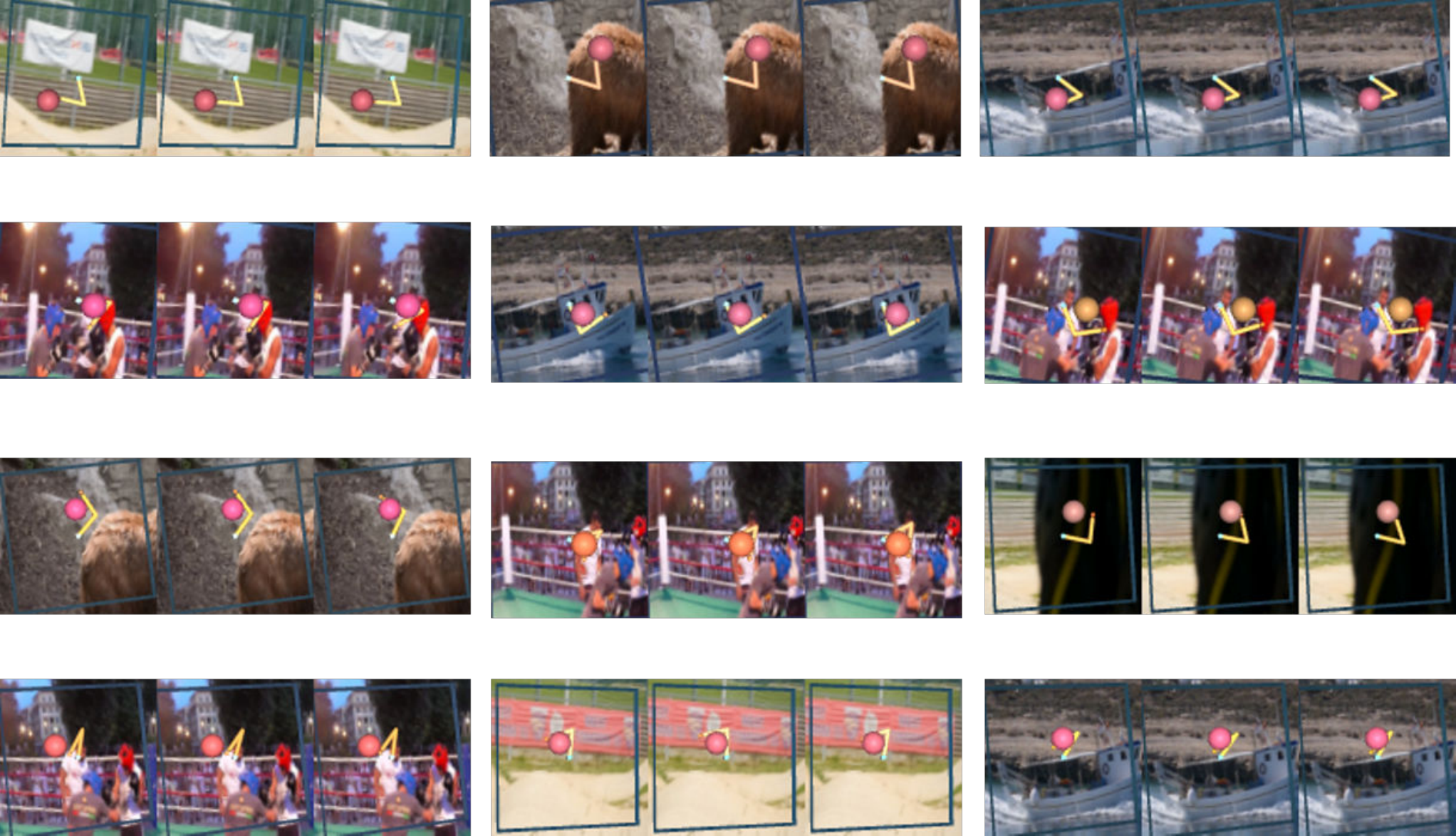}}
\caption{Clustering results visualization in the reacher easy task}
\label{fig:reacher-clu}
\vskip -0.4cm
\end{figure*}
\begin{figure*}
\centering
\centerline{\includegraphics[width=1\textwidth]{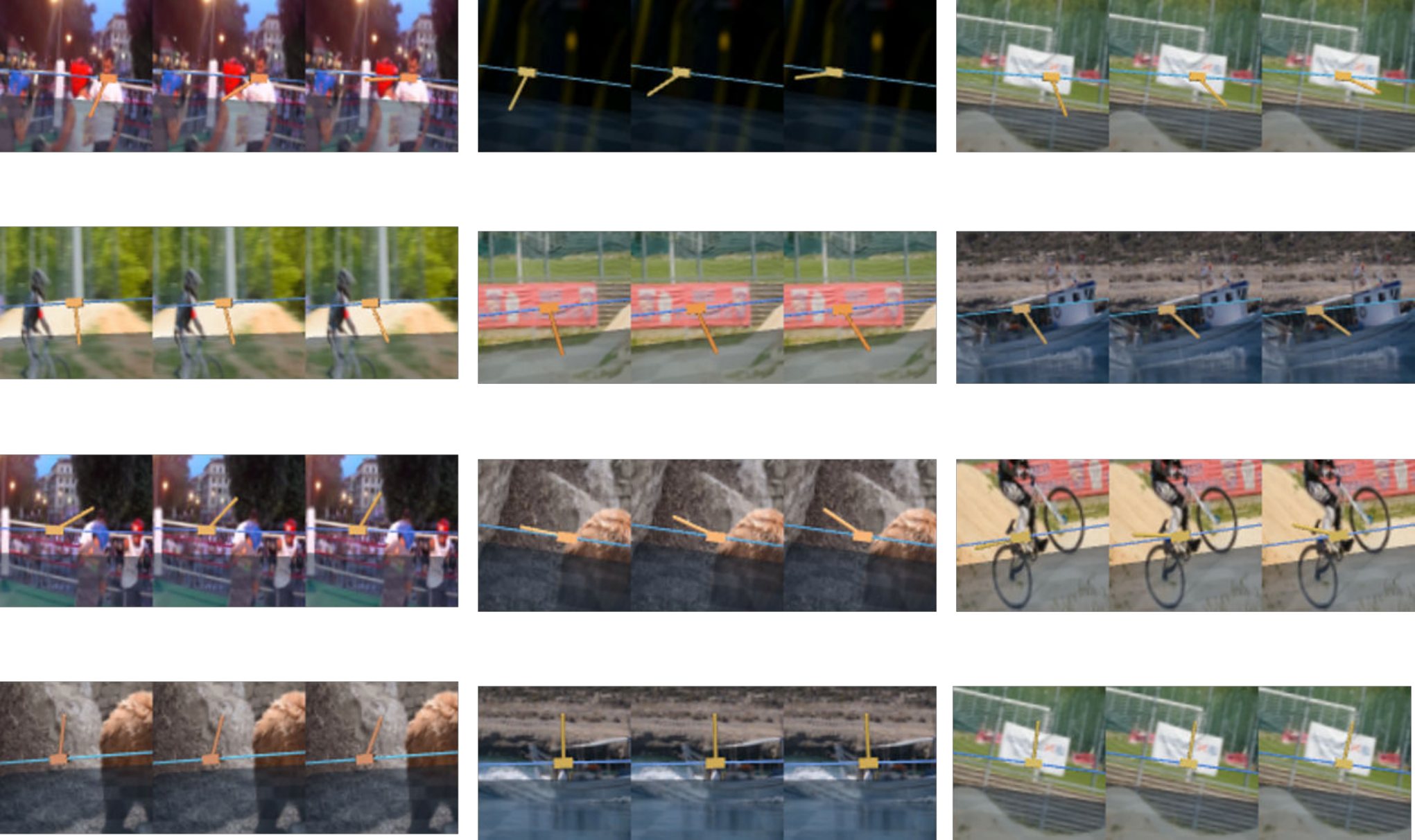}}
\caption{Clustering results visualization in the cartpole swingup task}
\label{fig:cartpole-clu}
\vskip -0.4cm
\end{figure*}
\begin{figure*}
\centering
\centerline{\includegraphics[width=1\textwidth]{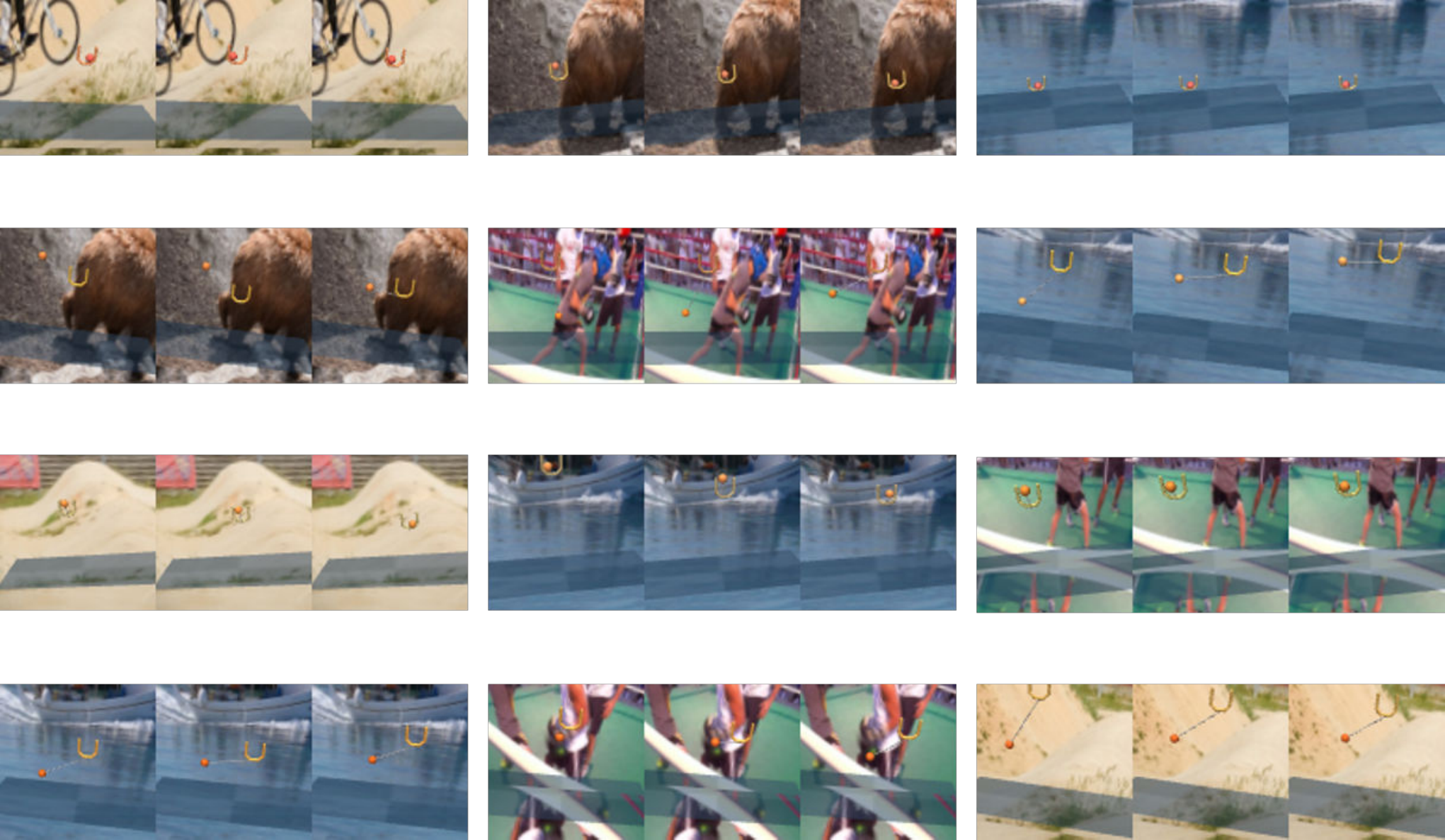}}
\caption{Clustering results visualization in the ball in cup catch task}
\label{fig:cup-clu}
\vskip -0.4cm
\end{figure*}
\end{document}